\def\ba#1\ea{\begin{align*}#1\end{align*}} 
\def\banum#1\eanum{\begin{align}#1\end{align}} 
\def\bi#1\ei{\begin{itemize}#1\end{itemize}} 
\newtheorem{lemma}{Lemma} 
\newtheorem{proposition}[lemma]{Proposition} 
\newtheorem{theorem}[lemma]{Theorem} 
\newtheorem{corollary}[lemma]{Corollary}
\newtheorem{definition}[lemma]{Definition}
\newcommand{\ulesqed}{\hfill\smiley}
\newenvironment{proof}{\par\noindent{\em Proof. }}{\ulesqed\\[2mm]}
\newcommand{\RR}{\mathbb{R}}
\newcommand{\Nat}{\mathbb{N}}
\let\R\undefined 
\newcommand{\R}{\RR}
\newcommand{\Ccal}{\mathcal{C}}
\renewcommand{\epsilon}{\ensuremath{\varepsilon}}
\renewcommand{\phi}{\ensuremath{\varphi}}
\newcommand{\blobb}[1]{%
\begin{list}{$\bullet$}{%
\setlength{\topsep}{0cm}
\setlength{\leftmargin}{0.5cm}
}{\item #1}%
\end{list}
}
\DeclareFontShape{U}{wasy}{b}{n}{ <-10> ssub * wasy/m/n
<10> <10.95> <12> <14.4> <17.28> <20.74> <24.88>wasyb10 }{}
\DeclareFontShape{U}{wasy}{m}{n}{ <5> <6> <7> <8> <9> gen * wasy
<10> <10.95> <12> <14.4> <17.28> <20.74> <24.88> <35> <40> <50> <60> wasy10  }{}
\newenvironment{itemize*}{
\begin{itemize}
\setlength{\parskip}{0em}
\setlength{\topparskip}{0em}
}
{\end{itemize}}
\newenvironment{enumerate*}{
\begin{enumerate}
\setlength{\parskip}{0em}
\setlength{\topparskip}{0em}
}
{\end{enumerate}}
\newsavebox{\savestuff}
\newlength{\backitem}
\newenvironment{boxit}{\begin{lrbox}{\savestuff}
	\begin{minipage}[b]{\linewidth}}
{\end{minipage}\end{lrbox}\fbox{\usebox{\savestuff}}}
\hfill\parbox{#1}}
\newcommand{\beq}{\begin{equation}}
\newcommand{\eeq}{\end{equation}}
\newcommand{\beqa}{\begin{eqnarray}}
\newcommand{\eeqa}{\end{eqnarray}}
\newcommand{\beqas}{\begin{eqnarray*}}
\newcommand{\eeqas}{\end{eqnarray*}}
\newcommand{\bit}{\begin{itemize}}
\newcommand{\eit}{\end{itemize}}
\newcommand{\bits}{\begin{itemize*}}
\newcommand{\eits}{\end{itemize*}}
\newcommand{\benum}{\begin{enumerate}}
\newcommand{\eenum}{\end{enumerate}}
\newcommand{\benums}{\begin{enumerate*}}
\newcommand{\eenums}{\end{enumerate*}}
\newcommand{\ktrue}{K}
\newcommand{\kalg}{K'}
\newlength{\minipagewidth}
\renewcommand{\phi}{\varphi}
\let\P\undefined
\newcommand{\P}{\mathbb{P}}
\newcommand{\mmppage}[1]{}
\newcommand{\initalg}{{\sc Pruned MinDiam}}  %
\newcommand{\commentout}[1]{}
\newtheorem{assumption}{Assumption}
\newcommand{\cenzero}{c^{<0>}}
\newcommand{\cenunu}{c^{<1>}}
\newcommand{\finv}{\Phi^{-1}}
\newcommand{\tilr}{\tilde{R}}
\newcommand{\tila}{\tilde{A}}
\newcommand{\clust}{{\Ccal}}
\begin{document}

\title{
How the initialization affects the stability of the $k$-means algorithm
}

\author{S{\'e}bastien Bubeck\\
Sequel Project, INRIA Lille\\
Lille, France\\
{\tt sebastien.bubeck@inria.fr}
\\ \\
Marina Meil\u{a}\\
University of Washington\\
Department of Statistics\\
Seattle, USA\\
{\tt mmp@stat.washington.edu}
\\ \\
Ulrike von Luxburg \\
Max Planck Institute for Biological Cybernetics\\
T{\"u}bingen, Germany\\
{\tt ulrike.luxburg@tuebingen.mpg.de}}

\maketitle

\begin{abstract}
  We investigate the role of the initialization for the stability of
  the $k$-means clustering algorithm. As opposed to other papers, we
  consider the actual $k$-means algorithm and do not ignore its
  property of getting stuck in local optima. We are
  interested in the actual clustering, not only in the costs of
  the solution. We analyze when different initializations lead to the
  same local optimum, and when they lead to different local
  optima. This enables us to prove that it is reasonable to select the number of
  clusters based on stability scores.
\end{abstract}

\section{Introduction}

\begin{figure}[t]
\begin{center}
\includegraphics[width=0.49\textwidth]{./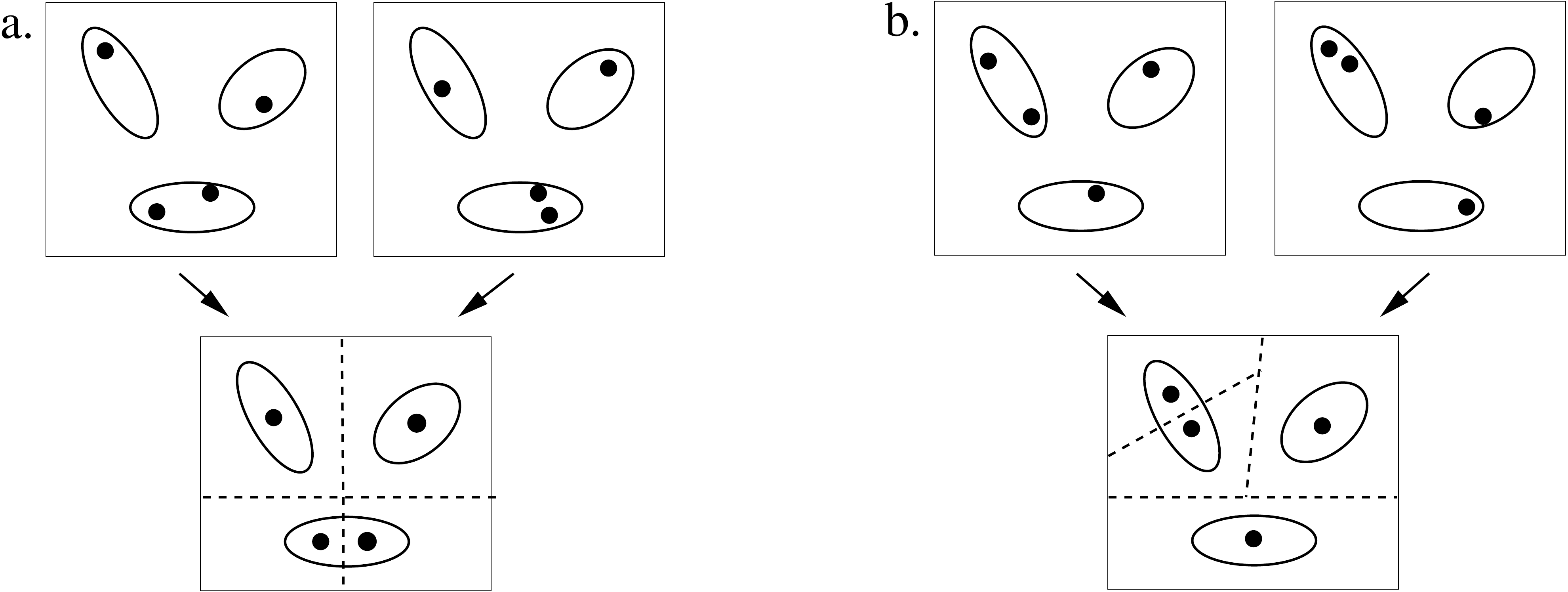}\hfill
\end{center}
\caption{Different initial configurations and the corresponding outcomes of the
  $k$-means algorithm. Figure a: the two boxes in the top row depict a data
  set with three clusters and four initial centers. Both boxes show
  different realizations of the same initial configuration. As can be
  seen in the bottom, both initializations lead to the same $k$-means
  clustering. Figure b: here the initial configuration is different
  from the one in Figure a, which leads to a different $k$-means
  clustering.
}
\label{fig-init-configuration}
\end{figure}

Stability is a popular tool for model selection in clustering, in
particular to select the number $k$ of clusters. The general idea is
that the best parameter $k$ for a given data set is the one which
leads to the ``most stable'' clustering results. While model selection
based on clustering stability is widely used in practice, its behavior
is still not well-understood from a theoretical point of view. A
recent line of papers discusses clustering stability with respect to
the $k$-means criterion in an idealized setting
\citep{BenLuxPal06,BenPalSim07,ShaTis08_nips,BenLux08,ShaTis08_colt,ShaTis09_nips}.
It is assumed that one has access to an ideal algorithm which can
globally optimize the $k$-means criterion.  For this perfect
algorithm, results on stability are proved in the limit of the sample
size $n$ tending to infinity. However, none of these results applies
to the $k$-means algorithm as used in practice: they do not take into
account the problem of getting stuck in local optima. In our current
paper we try to overcome this shortcoming. We
study the stability of the actual $k$-means {algorithm} rather than
the idealized one.  \\

Our analysis theoretically confirms the following intuition. Assume the data set has $\ktrue$ well-separated clusters, and assume
that $k$-means is initialized with $\kalg \geq \ktrue$
initial centers.  We conjecture that when there is at least
one initial center in each of the underlying clusters, then {\em the
initial centers tend to stay in the clusters they had been placed in. }

Consequently, the final clustering result is essentially determined by the {\em number} of initial centers in each
of the true clusters (which we call the {\em initial configuration}), see Figure~\ref{fig-init-configuration} for an illustration.
In particular if one uses an initialization scheme which has the desired property of placing at least one center in each cluster with high probability, then the following will
hold:
{If $K' = K$, we have one center per cluster, with
  high probability. The configuration will remain the same during the course of the algorithm.}
{If $K' > K$, different configurations can occur. Since different configurations lead to different
  clusterings we obtain significantly different final clusterings depending on the random initialization, in other words we observe {\em instability (w.r.t initialization)} . }

Note that our argument does not imply stability or instability for $K' < K$. As we have less initial centers than clusters, for any initialization scheme there will be some clusters with no initial center. In this setting centers do move between clusters, and this cannot be analyzed without looking at the actual positions of the centers. Actually, as can be seen from examples, in this case one can have either stability or instability. \\

The main point of our paper is that the arguments above can explain
why the parameter $k$ selected by stability based model selection is
often the true number of clusters, under the assumption that the data
set consists of well separated clusters and one uses an appropriate
initialization scheme. \\

Even though the arguments above are very intuitive, even individual parts of our conjecture turn out to be
surprisingly hard. In this paper we only go a first step towards a complete proof,
considering mixtures of Gaussians in one dimension. For a mixture of
two Gaussians ($\ktrue=2$) we prove that the $k$-means algorithm is
stable for $\kalg=2$ and instable for $\kalg=3$. The proof technique is
based on our configuration arguments outlined above. We also provide some preliminary results to study the general case, that is when the data space is $\R^d$ and we do not make any parametric assumption on the probability distribution.
Then we have a closer look at initialization schemes for $k$-means,
when $\kalg\geq\ktrue$. Is there an initialization scheme that will
place at least one center in each true cluster w.h.p? Clearly, the
naive method of sampling $\kalg$ centers from the data set does not
satisfy this property except for very small $K$. We study a standard
but not naive initialization scheme and prove that it has the
desirable property we were looking for. \\

Of course there exist numerous other papers which
study theoretical properties of the actual $k$-means
algorithm. However, these papers are usually concerned with the {\em value}
of the $k$-means objective function at the final solution, not with
the {\em position} of the final centers. As far as we know, our paper is the
first one which analyzes the ``regions of attractions'' of the
different local optima of the actual $k$-means algorithm and derives results on the stability of the
$k$-means clustering itself.

\section{Notation and assumptions} \label{sec-preliminaries}

In the following we assume that we are given a set of $n$ data points
$X_1, ..., X_n \in \R$ which have been drawn i.i.d. according to
some underlying distribution $\P$.
For a center vector $c = (c_1, ...,c_{K'})$ with $c_i \in \R^d$ we denote the cluster induced by center
$c_k$ with $\Ccal_k(c)$. The number of points in this cluster is denoted $N_k(c)$.
The clustering algorithm we study in this paper is the standard
$k$-means algorithm. We denote the  initial centers by $c_1^{<0>},
...,c_{K'}^{<0>}$ with $c_i^{<0>} \in \R$, and the centers after step $t$ of the algorithm as
$c_1^{<t>}, ...,c_{K'}^{<t>}$. By $\ktrue$ we denote the true number of clusters,
by $\kalg$ the number of clusters constructed by the $k$-means
algorithm.
It attempts to minimize the $k$-means
objective function
\ba
W_n: \R^{d K'} \to \R, W_n(c_1, ..., c_{K'})
= \frac{1}{2} \sum_{i=1}^n
\min_{k =  1,..,{K'}}
|| c_k - X_i||^2.
\ea
We now restate the $k$-means algorithm: \\
{\tt
Input: $X_1,..., X_n \in \R^d$, $K' \in \Nat$\\
 Initialize the centers $c_1^{<0>}, ...,c_{K'}^{<0>} \in \R^d$\\
 Repeat until convergence: \\
 1. Assign data points to closest centers. \\
 2. Re-adjust cluster means:
 \begin{equation} \label{kmeans-step2}
 c^{<t+1>}_k  =  \frac{1}{N_k(c^{<t>})} \sum_{i: \; X_i \in \Ccal_k(c^{<t>})_k} X_i
 \end{equation}
 Output: $c = (c_1^{<final>}, ..., c_{K'}^{<final>})$.\\
 }

Traditionally, the instability of a clustering algorithm is defined as
the mean (with respect to the random sampling of data points) minimal matching distance between two clusterings obtained on two different set of data points. For the actual $k$-means algorithm, a second
random process is the random initialization (which has not been taken
into account in previous literature). Here we additionally have to
take the expectation over the random initialization when computing the
stability of an algorithm. In this paper we
will derive qualitative rather than quantitative results on
stability, thus we omit more detailed formulas. \\

In the following we restrict our
attention to the simple setting where the underlying distribution is
a mixture of Gaussians on $\R$ and we have access to
an infinite amount of data from $\P$.
 In particular, instead of estimating means empirically when calculating the new centers of a $k$-means
 step we assume access to the true means. In this case, the update step
 of the $k$-means algorithm can be written as
$$c^{<t+1>}_k  = \frac{\int_{\Ccal_k(c^{<t>})} x f(x) dx}{\int_{\Ccal_k(c^{<t>})} f(x) dx}$$
where $f$ is the density of the probability distribution $\P$.
Results in the finite data case can be derived by the help of
concentrations inequalities. However, as this introduces heavy
notation and our focus lies on the random initialization rather than the
random drawing of data points we skip the
details. To further set up notation we denote $\varphi_{\mu,\sigma}$
the pdf of a Gaussian distribution with mean $\mu$ and variance
$\sigma$.  We also denote $f(x) = \sum_{k=1}^K w_k
\varphi_{\mu_k,\sigma}$ where $K$ is the number of Gaussians, the
weights $w_k$ are positive and sum to one, the means $\mu_{1:K} =
(\mu_1, \hdots, \mu_K)$ are ordered, $\mu_1 \leq \hdots \leq
\mu_K$. The minimum separation between
two Gaussians is denoted by $\Delta=\min_k(\mu_{k+1}-\mu_k)$. For the
standard normal distribution we denote the pdf as $\phi$ and the cdf
as $\Phi$.  \\

\section{The level sets approach} \label{sec-levelsets}

In this section we want to prove that
if we run the $k$-means algorithm with $K'=2$ and
$K'=3$ on a mixture of two Gaussians, then the resulting clustering depends exclusively on the initial configuration.
More precisely if we initialize the algorithm such that each cluster
gets at least one center and the initial centers are ``close enough''
to the true cluster means, then during the course of the algorithm the
initial centers do not leave the cluster they had been placed in. This
implies stability for $K'=2$ since there is only one possible configuration satisfying this constraint.
On the other hand for $K'=3$ we have two possible configurations, and thus instability will occur. \\

The following function plays an important role in our analysis:
\ba
H: \R^2 \to \R, \; H(x,y) = x \Phi(-x+y) - \varphi(-x+y) .
\ea
Straightforward computations show that
for any $\mu, \sigma, \alpha$ and $h$ one has
\banum \label{lemma:H}
\int_{-\infty}^h (x - \mu + \alpha) \varphi_{\mu,\sigma} (x) dx =
\sigma H\left(\frac{\alpha}{\sigma},
  \frac{h+\alpha-\mu}{\sigma}\right) .
\eanum

We describe necessary and sufficient conditions to obtain stability results for particular ``regions'' in terms of the level
sets of $H$.

\subsection{Stability in the case of  two initial centers} \label{sec:K2}
We consider the square
$S_a = [\mu_1 - a, \mu_1 + a] \times [\mu_2 - a, \mu_2 + a]$ in
$\R^2$.
The region $S_a$ is called {\em a stable region} if
\begin{equation} \label{eq:13}
c^{<0>} \in S_a \Rightarrow c^{<1>} \in S_a
\end{equation}

\begin{proposition}[Stable region for $K'=2$] \label{th:stabregion2}
Equation (\ref{eq:13}) is true if and only if the following four inequalities are satisfied:
\begin{footnotesize}
\begin{align}
&\bullet w_1 H\left(\frac{a}{\sigma}, \frac{\Delta}{2 \sigma}\right) + w_2 H\left(\frac{a+\Delta}{\sigma}, \frac{\Delta}{2 \sigma}\right) \geq 0 \label{eq:8}\\ \nonumber \\
&\bullet w_1 H\left(-\frac{a}{\sigma}, \frac{\Delta}{2 \sigma}\right) + w_2 H\left(\frac{-a+\Delta}{\sigma}, \frac{\Delta}{2 \sigma}\right) \leq 0 \label{eq:9}\\ \nonumber \\
&\bullet w_1 H\left(\frac{a - \Delta}{\sigma}, -\frac{\Delta}{2 \sigma}\right) + w_2 H\left(\frac{a}{\sigma}, \frac{\Delta}{2 \sigma}\right) \geq 0 \label{eq:10}\\ \nonumber \\
&\bullet w_1 H\left(\frac{-a-\Delta}{\sigma}, -\frac{\Delta}{2 \sigma}\right) + w_2 H\left(-\frac{a}{\sigma}, -\frac{\Delta}{2 \sigma}\right) \leq 0 \label{eq:11}
\end{align}
\end{footnotesize}
\end{proposition}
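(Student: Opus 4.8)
The plan is to exploit the fact that, in the infinite-sample regime with $K'=2$, the whole update map $c^{<0>}\mapsto c^{<1>}$ depends on the two initial centers only through the cluster boundary, i.e. the midpoint $m=(c_1^{<0>}+c_2^{<0>})/2$. For $c^{<0>}\in S_a$ (so that $c_1^{<0>}<c_2^{<0>}$) the left cluster is $(-\infty,m)$ and the right cluster is $(m,\infty)$, hence the updated centers are the conditional means
\[
c_1^{<1>}=F_1(m):=\frac{\int_{-\infty}^{m}x\,f(x)\,dx}{\int_{-\infty}^{m}f(x)\,dx},\qquad c_2^{<1>}=F_2(m):=\frac{\int_{m}^{\infty}x\,f(x)\,dx}{\int_{m}^{\infty}f(x)\,dx}.
\]
As $c^{<0>}$ ranges over $S_a$, the midpoint $m$ ranges over $[\bar\mu-a,\bar\mu+a]$ with $\bar\mu=(\mu_1+\mu_2)/2=\mu_1+\Delta/2$, and every such value is attained (the corners $c^{<0>}=(\mu_1\mp a,\mu_2\mp a)$ realize $m=\bar\mu\mp a$). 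Thus (\ref{eq:13}) is equivalent to requiring $F_1(m)\in[\mu_1-a,\mu_1+a]$ and $F_2(m)\in[\mu_2-a,\mu_2+a]$ for all $m$ in this interval.

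The key structural observation is that both conditional means are nondecreasing in $m$: the quotient rule gives $F_1'(m)=\frac{f(m)}{(\int_{-\infty}^m f)^2}\int_{-\infty}^m (m-x)f(x)\,dx\ge 0$ and $F_2'(m)=\frac{f(m)}{(\int_m^\infty f)^2}\int_m^\infty (x-m)f(x)\,dx\ge 0$, since the integrands are nonnegative on the respective half-lines. Consequently the image of the $m$-interval under $F_1$ is exactly $[F_1(\bar\mu-a),F_1(\bar\mu+a)]$, and likewise for $F_2$, so the containment holds for every admissible $m$ if and only if it holds at the two endpoints. This produces four scalar inequalities: the lower bounds $F_1(\bar\mu-a)\ge\mu_1-a$ and $F_2(\bar\mu-a)\ge\mu_2-a$ (worst case at $m=\bar\mu-a$), and the upper bounds $F_1(\bar\mu+a)\le\mu_1+a$ and $F_2(\bar\mu+a)\le\mu_2+a$ (worst case at $m=\bar\mu+a$). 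Monotonicity yields both directions of the ``if and only if'' at once: sufficiency because the extreme values of $F_1,F_2$ are attained at the endpoints, and necessity because those endpoints are realized by genuine corner configurations lying in $S_a$.

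It remains to put each endpoint inequality into closed form. Clearing the positive denominators, $F_1(\bar\mu-a)\ge\mu_1-a$ becomes $\int_{-\infty}^{\bar\mu-a}(x-\mu_1+a)f(x)\,dx\ge0$, and analogously the other three become sign conditions on half-line integrals of $(x-\mu_i\pm a)f$. Splitting $f=w_1\varphi_{\mu_1,\sigma}+w_2\varphi_{\mu_2,\sigma}$ and applying the identity (\ref{lemma:H}) to each component — writing, for the Gaussian centered at $\mu_2$, $x-\mu_1+a=x-\mu_2+(\Delta+a)$ so the shift $\alpha$ absorbs the separation — expresses the two center-$1$ conditions directly through $H$; at $m=\bar\mu\mp a$ the second argument $\tfrac{m+\alpha-\mu}{\sigma}$ collapses to $\pm\tfrac{\Delta}{2\sigma}$ because $\mu_2-\mu_1=\Delta$, giving (\ref{eq:8}) and (\ref{eq:9}). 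For the two center-$2$ conditions the integral runs over $(m,\infty)$; I would reduce these to $H$-values by writing $\int_m^\infty=\int_{-\infty}^\infty-\int_{-\infty}^m$, using the total integrals $\int_{-\infty}^\infty(x-\mu+\alpha)\varphi_{\mu,\sigma}\,dx=\alpha$ and the elementary symmetry $H(x,y)=H(-x,-y)+x$ (equivalently the reflection $x\mapsto 2\mu-x$ of the Gaussian), after which the constant terms cancel because $w_1+w_2=1$, leaving (\ref{eq:10}) and (\ref{eq:11}).

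The conceptual crux, and the step I expect to be most delicate, is the monotonicity of the conditional means: it is exactly what makes the reduction to the two endpoints both necessary and sufficient, and hence what upgrades the computation to an ``if and only if''. The merely tedious part is the sign bookkeeping through the reflection identity, which must be carried out carefully so that the two upper-tail conditions land on precisely the stated arguments of $H$.
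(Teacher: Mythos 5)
Your reduction is sound, and indeed cleaner than the paper's own: the paper (whose proof of Proposition \ref{th:stabregion2} is deferred to the sketch for Proposition \ref{th:stabregion}) reduces to endpoint conditions via monotonicity of the partial integrals $h\mapsto\int_{-\infty}^{h}(x-\mu_1+a)f(x)\,dx$ on $[\mu_1-a,\infty)$, whereas you use monotonicity of the conditional means $F_1,F_2$ themselves, which treats the lower and upper cluster symmetrically and yields all four endpoint conditions at once. The midpoint parametrization, the monotonicity computation, the corner-realization argument for necessity, and the derivation of \eqref{eq:8} and \eqref{eq:9} are all correct.

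The gap is the last step, the claim that the two upper-cluster conditions ``land on'' \eqref{eq:10} and \eqref{eq:11}. They do not. Carrying out your own recipe for $F_2(\bar{\mu}-a)\geq\mu_2-a$, i.e.\ $\int_{\bar{\mu}-a}^{\infty}(x-\mu_2+a)f\geq0$: the total integral is $w_1(a-\Delta)+w_2a$, and absorbing each constant into its own $H$-term via $x-H(x,y)=-H(-x,-y)$ (the constants are \emph{not} cancelled by $w_1+w_2=1$; each is absorbed separately by the reflection identity) gives
\[
w_1 H\!\left(\tfrac{\Delta-a}{\sigma},\tfrac{\Delta}{2\sigma}\right)+w_2H\!\left(-\tfrac{a}{\sigma},\tfrac{\Delta}{2\sigma}\right)\;\leq\;0,
\]
which is \eqref{eq:9} with $w_1,w_2$ interchanged; likewise $F_2(\bar{\mu}+a)\leq\mu_2+a$ becomes \eqref{eq:8} with the weights interchanged. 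These differ from the printed \eqref{eq:10}--\eqref{eq:11}, and the difference is not cosmetic. The two $H$-terms of \eqref{eq:10} have second arguments $-\Delta/(2\sigma)$ and $+\Delta/(2\sigma)$, which by \eqref{lemma:H} correspond to the \emph{different} cutoffs $\bar{\mu}-a$ and $\bar{\mu}-a+\Delta$, so \eqref{eq:10} is not a decomposition of any single half-line integral of $f$; and the left side of \eqref{eq:11} equals $\frac{1}{\sigma}\int_{-\infty}^{\bar{\mu}+a}(x-\mu_2-a)f$, whose integrand is negative on the whole range of integration, so \eqref{eq:11} holds vacuously and cannot encode the upper bound on the second center. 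Concretely, take $\sigma=1$, $a=1$, $\Delta=4$, $w_1=0.3$: the endpoint values are $F_1(1)\approx-0.28$, $F_1(3)\approx0.67$, $F_2(1)\approx3.85$, $F_2(3)\approx4.29$, so $S_a$ is a stable region, yet the left side of \eqref{eq:10} is $0.3\,H(-3,-2)+0.7\,H(1,2)\approx0.3(-2.77)+0.7(0.60)\approx-0.41<0$, violating \eqref{eq:10}. So \eqref{eq:10}--\eqref{eq:11} as printed are not necessary for stability (compare \eqref{eq:6}--\eqref{eq:7} in the $K'=3$ case, which correctly carry constant right-hand sides $\pm b/\sigma-w_1\Delta/\sigma$ of exactly the kind that is missing here); your honest computation produces the corrected conditions, not the printed ones. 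The failure occurs precisely at the step you flagged as ``merely tedious'' and did not carry out: as written, your proof asserts an equivalence with \eqref{eq:10}--\eqref{eq:11} that is false, so it does not establish the proposition in its stated form.
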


\begin{proof}
Similar to the proof of Proposition \ref{th:stabregion}, see below.
\end{proof}

This proposition gives necessary and sufficient conditions for the
stability of $k$-means in the case $K'=2$.  In the
following corollary we show an example of the kind of result we can
derive from Proposition \ref{th:stabregion2}.
Note that the
parameters $a$ and $\Delta$ only appear relative to $\sigma$. This
allows us to consider an arbitrary $\sigma$.

\begin{corollary}[Stability for $K'=2$] \label{coro:num2}
Assume that $\min(w_1,w_2)=0.2$ and $\Delta = 7 \sigma$. Assume that we have an initialization scheme satisfying:
 \begin{itemize}
 \item with probability at least $1-\delta$ we have one initial center
   within $2.5 \sigma$ of $\mu_1$ and one within $2.5 \sigma$ of
   $\mu_2$.
 \end{itemize}
 Then $k$-means is stable in the sense that with probability at least
 $1-\delta$ it converges to a solution with one center within $2.5
 \sigma$ of $\mu_1$ and one within $2.5 \sigma$ of $\mu_2$.
\end{corollary}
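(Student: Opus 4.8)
The plan is to apply Proposition~\ref{th:stabregion2} with the concrete choice $a = 2.5\sigma$ and $\Delta = 7\sigma$, and then to combine the resulting invariance of the square $S_a$ with the hypothesis on the initialization scheme. First I would invoke the scale invariance noted just before the corollary: since $a$ and $\Delta$ enter the four inequalities (\ref{eq:8})--(\ref{eq:11}) only through the ratios $a/\sigma$ and $\Delta/\sigma$, I may set $\sigma = 1$, so that the arguments of $H$ become the fixed numbers $a = 2.5$, $\Delta/2 = 3.5$, $a+\Delta = 9.5$, $-a+\Delta = 4.5$, together with their negatives. The whole corollary then reduces to a finite check: verify that, for the admissible weights, the four inequalities of Proposition~\ref{th:stabregion2} hold, which certifies that $S_{2.5\sigma}$ is a stable region in the sense of (\ref{eq:13}).

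Second, I would upgrade the one-step invariance $c^{<0>} \in S_a \Rightarrow c^{<1>} \in S_a$ to a statement about the limit. By induction, invariance of $S_a$ under a single $k$-means step gives $c^{<t>} \in S_a$ for all $t$, so every iterate, and hence the output to which the algorithm converges, lies in the closed set $S_a$; that is, one center stays within $2.5\sigma$ of $\mu_1$ and the other within $2.5\sigma$ of $\mu_2$. Finally I would condition on the event, of probability at least $1-\delta$, that the initialization places one center within $2.5\sigma$ of each mean, which is exactly the event $c^{<0>} \in S_{2.5\sigma}$; on this event the previous step applies and yields the claimed conclusion.

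The main work, and the main obstacle, is the finite check itself. The inequalities involve $H(x,y) = x\Phi(-x+y) - \varphi(-x+y)$ evaluated at the points above, so they are transcendental and must be certified by rigorous numerical bounds on $\Phi$ and $\varphi$ rather than by a closed form. Care is also needed with the dependence on the weights: since $w_1 + w_2 = 1$ and $\min(w_1,w_2) = 0.2$, the weight pair is $\{0.2, 0.8\}$, and because each of (\ref{eq:8})--(\ref{eq:11}) is linear in $(w_1, w_2)$ it suffices to control the sign of the corresponding combination at these extreme weights (so the argument would equally cover $\min(w_1,w_2) \geq 0.2$). The reflection symmetry about the midpoint $(\mu_1+\mu_2)/2$, which swaps the two Gaussians and exchanges (\ref{eq:8}) with (\ref{eq:11}) and (\ref{eq:9}) with (\ref{eq:10}), should halve the number of independent evaluations. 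Since the separation $\Delta = 7\sigma$ is large relative to $a = 2.5\sigma$, I expect the cross terms such as $H(9.5,3.5)$ and $H(-9.5,-3.5)$ to be extremely small and the inequalities to hold with a comfortable margin, which is what makes a purely numerical verification safe.
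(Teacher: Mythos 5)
Your proposal is correct and follows essentially the same route as the paper: the paper's proof is precisely a numerical check of Equations (\ref{eq:8})--(\ref{eq:11}) for $a=2.5\sigma$, $\Delta=7\sigma$ at both weight orderings $w_1=0.2$ and $w_2=0.2$, followed by an appeal to Proposition \ref{th:stabregion2} to conclude that $S_a$ is a stable region. Your additional observations (scale invariance in $\sigma$, the induction upgrading one-step invariance of the closed set $S_a$ to invariance of all iterates and of the limit, linearity in the weights, and the reflection symmetry) merely make explicit what the paper leaves implicit.
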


\begin{proof}
We simply check numerically that for $a= 2.5 \sigma, \Delta = 7 \sigma$ and $w_1=0.2$ (we also check $w_2=0.2$)
Equations (\ref{eq:8}) - (\ref{eq:11}) are true. Then by Proposition \ref{th:stabregion2} we know that $S_a$ is a stable region which implies the result.
\end{proof}

\subsection{Instability in the case of 3 centers} \label{sec:K3} The
case of $3$ centers gets more intricate. %
Consider the prism $ T_{a,b,\epsilon}$ and its symmetric version
$sym(T_{a,b,\epsilon})$ in $\R^3$:
\ba
 T_{a,b,\epsilon} = &\{c \in \R^3 : c_1 \leq c_2 \leq c_3, \\
& c \in [\mu_1-a, \mu_1+a-\epsilon]\times [\mu_1-a+\epsilon,\mu_1+a] \times [\mu_2-b,\mu_2+b] \}  \\
 sym(T_{a,b,\epsilon}) = &  \{c \in \R^3 : c_1 \leq c_2 \leq c_3, \\
& c \in [\mu_1-b, \mu_1+b]\times [\mu_2-a,\mu_2+a-\epsilon] \times [\mu_2-a+\epsilon,\mu_2+a] \}  .
\ea
If we have an initialization scheme such that each cluster
gets at least one center and the initial centers are close enough to
the true cluster means, then we initialize either in $T_{a,b,\epsilon}$
or $sym(T_{a,b,\epsilon})$. Thus, if these regions are stable in the
following sense:
\begin{equation}
c^{<0>} \in T_{a,b,\epsilon} \Rightarrow c^{<1>} \in T_{a,b,\epsilon} \label{eq:1}
\end{equation}
then the global k-means algorithm will be instable, leading either to
a clustering in $T_{a,b,\epsilon}$ or
$sym(T_{a,b,\epsilon})$. Expressed in the
terms used in the introduction, the algorithm will be initialized
with different configurations and thus be instable.

\begin{proposition}[Stable region for $K'=3$] \label{th:stabregion}
Equation (\ref{eq:1}) is true if and only if all the following inequalities are satisfied:
\begin{footnotesize}
\begin{align}
\bullet & w_1 H\left(\frac{a}{\sigma}, \frac{\epsilon}{2 \sigma}\right) + w_2 H\left(\frac{a+\Delta}{\sigma}, \frac{\epsilon}{2 \sigma}\right) \geq 0 \label{eq:2}\\ \nonumber \\
\bullet & w_1 H\left(\frac{-a+\epsilon}{\sigma}, \frac{\epsilon}{2 \sigma}\right) + w_2 H\left(\frac{-a+\Delta+\epsilon}{\sigma}, \frac{\epsilon}{2 \sigma}\right) \leq 0 \label{eq:3} \\ \nonumber \\
\nonumber  \bullet & w_1 H\left(\frac{a-\epsilon}{\sigma}, \frac{a-b+\Delta-\epsilon}{2 \sigma}\right) + w_2 H\left(\frac{a-\epsilon+\Delta}{\sigma}, \frac{a-b+\Delta-\epsilon}{2 \sigma}\right) \\
& \geq w_1 H\left(\frac{a-\epsilon}{\sigma}, -\frac{\epsilon}{2 \sigma}\right) + w_2 H\left(\frac{a-\epsilon+\Delta}{\sigma}, -\frac{\epsilon}{2 \sigma}\right) \label{eq:4} \\ \nonumber \\
\nonumber \bullet & w_1 H\left(- \frac{a}{\sigma}, \frac{b-a+\Delta}{2 \sigma}\right) + w_2 H\left(\frac{-a+\Delta}{\sigma}, \frac{b-a+\Delta}{2 \sigma}\right) \\
& \leq w_1 H\left(- \frac{a}{\sigma}, -\frac{\epsilon}{2 \sigma}\right) + w_2 H\left(\frac{-a+\Delta}{\sigma}, -\frac{\epsilon}{2 \sigma}\right) \label{eq:5} \\ \nonumber \\
\nonumber \bullet & w_1 H\left(\frac{b-\Delta}{\sigma}, \frac{b-a-\Delta+\epsilon}{2 \sigma}\right) + w_2 H\left(\frac{b-\Delta}{\sigma}, \frac{b-a-\Delta+\epsilon}{2 \sigma}\right) \\
& \leq b/\sigma - w_1 \Delta/\sigma \label{eq:6} \\ \nonumber\\
\nonumber \bullet & w_1 H\left(\frac{-b-\Delta}{\sigma}, \frac{a-b-\Delta}{2 \sigma}\right) + w_2 H\left(-\frac{b}{\sigma}, \frac{a-b-\Delta}{2 \sigma}\right) \\
& \geq - b / \sigma - w_1 \Delta / \sigma \label{eq:7}
\end{align}
\end{footnotesize}
\end{proposition}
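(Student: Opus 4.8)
The plan is to work with the infinite-data update, under which the new center $c_k^{<1>}$ equals the conditional mean of $f$ over the $k$-th Voronoi cell of $c^{<0>}$. On $\R$, for ordered centers $c_1^{<0>}\le c_2^{<0>}\le c_3^{<0>}$ these cells are the intervals $(-\infty,h_1]$, $[h_1,h_2]$, $[h_2,\infty)$ split at the midpoints $h_1=(c_1^{<0>}+c_2^{<0>})/2$ and $h_2=(c_2^{<0>}+c_3^{<0>})/2$. Because conditional means over three ordered intervals automatically satisfy $c_1^{<1>}\le h_1\le c_2^{<1>}\le h_2\le c_3^{<1>}$, the ordering constraint built into $T_{a,b,\epsilon}$ comes for free, and $c^{<1>}\in T_{a,b,\epsilon}$ is equivalent to six scalar bounds: a lower and an upper bound on each of $c_1^{<1>},c_2^{<1>},c_3^{<1>}$, with thresholds $\mu_1-a,\ \mu_1+a-\epsilon$ for the first center, $\mu_1-a+\epsilon,\ \mu_1+a$ for the second, and $\mu_2-b,\ \mu_2+b$ for the third.

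Next I would collapse the universal quantifier over $c^{<0>}\in T_{a,b,\epsilon}$ in Equation (\ref{eq:1}) to a finite set of extremal checks. The conditional mean $m$ of $f$ over an interval is nondecreasing in each endpoint---differentiating in the upper endpoint $h$ produces a factor $h-m\ge 0$, and the lower endpoint is symmetric---and each $h_i$ is increasing in the coordinates $c_j^{<0>}$ it contains. Hence $c_1^{<1>}$ is monotone in $h_1$, $c_2^{<1>}$ is jointly monotone in $(h_1,h_2)$, and $c_3^{<1>}$ is monotone in $h_2$. Each of the six bounds, demanded for all of $T_{a,b,\epsilon}$, is therefore equivalent to the same bound at the corner of the box that makes the relevant center extremal: e.g. the lower bound on $c_1^{<1>}$ is tightest at $c_1^{<0>}=\mu_1-a,\ c_2^{<0>}=\mu_1-a+\epsilon$, giving $h_1=\mu_1-a+\epsilon/2$, while the upper bound on $c_2^{<1>}$ is tightest at the all-maximal corner.

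Then each extremal bound $c_k^{<1>}\gtrless\tau$ would be rewritten as $\int_{I_k}(x-\tau)f(x)\,dx\gtrless 0$ on the corresponding cell $I_k$; splitting $f=w_1\varphi_{\mu_1,\sigma}+w_2\varphi_{\mu_2,\sigma}$ and applying the identity (\ref{lemma:H}) to each piece---writing $x-\mu_1+\alpha=x-\mu_2+(\alpha+\Delta)$ in the $\mu_2$-term so that both pieces share the same second argument of $H$---turns the bound into an inequality among values of $H$. For the left cell $(-\infty,h_1]$ a single truncation gives one $H$-combination, producing (\ref{eq:2}) and (\ref{eq:3}); for the middle cell $[h_1,h_2]$ the difference of two truncations produces (\ref{eq:4}) and (\ref{eq:5}); for the right cell $[h_2,\infty)$ one writes the tail as the total minus the truncated part, the total being the closed form $\int_{-\infty}^{\infty}(x-\mu_2\pm b)f=\pm b-w_1\Delta$, which after division by $\sigma$ furnishes the right-hand sides $b/\sigma-w_1\Delta/\sigma$ and $-b/\sigma-w_1\Delta/\sigma$ of (\ref{eq:6}) and (\ref{eq:7}). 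Substituting the corner values of $h_1,h_2$ into the $H$-arguments then reproduces (\ref{eq:2})--(\ref{eq:7}). The necessity direction is immediate once one checks that each extremal corner genuinely lies in $T_{a,b,\epsilon}$ (this needs $\Delta$ large relative to $a,b,\epsilon$ so the corners respect $c_2^{<0>}\le c_3^{<0>}$), and sufficiency is exactly the monotonicity reduction.

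The main obstacle I expect is bookkeeping rather than conceptual: correctly pairing each of the six bounds with its extremal corner under the joint monotonicity in $(h_1,h_2)$, and then carrying the affine substitutions $\mu_2=\mu_1+\Delta$ and the corner values of $h_1,h_2$ through the two arguments of $H$ without sign slips. The one genuinely load-bearing point to state carefully is that it is the monotonicity of the conditional mean $m$---not of the raw integral $\int(x-\tau)f$, which is itself non-monotone in the endpoint---that legitimately reduces the universally quantified condition to a single extremal evaluation. The $K'=2$ statement of Proposition~\ref{th:stabregion2} then follows by the identical scheme with only two cells and the single midpoint $h=(c_1^{<0>}+c_2^{<0>})/2$.
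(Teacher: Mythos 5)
Your proposal is correct and is essentially the paper's own argument: in one dimension the update preserves the ordering of the centers, so $c^{<1>} \in T_{a,b,\epsilon}$ reduces to six scalar bounds, each of which is collapsed by monotonicity in the Voronoi midpoints to a single extremal corner of the prism and then converted into an $H$-inequality via the decomposition $f = w_1\varphi_{\mu_1,\sigma} + w_2\varphi_{\mu_2,\sigma}$ and the identity (\ref{lemma:H}), writing $x-\mu_2+(\alpha+\Delta)$ in the $\mu_2$-term; the paper sketches exactly this for (\ref{eq:2}) and declares the remaining five constraints ``similar,'' so your handling of the middle cell (difference of two truncations) and of the right tail (total mass $\pm b - w_1\Delta$ minus the truncated part, giving the right-hand sides of (\ref{eq:6}) and (\ref{eq:7})) simply fills in details the paper omits.

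One correction to your ``load-bearing point'': the paper's reduction uses precisely the raw integral $h \mapsto \int_{-\infty}^{h}(x-\mu_1+a)f(x)\,dx$, and this is legitimate because its derivative $(h-\mu_1+a)f(h)$ is nonnegative on $[\mu_1-a,+\infty)$, which contains every admissible midpoint (here $h_1 \geq \mu_1-a+\epsilon/2$); so the raw integral is monotone exactly on the range where it is evaluated, and your conditional-mean monotonicity is an equivalent alternative, not a needed repair.
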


\begin{proof}{\em (Sketch)}
  Let $c^{<0>} \in T_{a,b,\epsilon}$. Note that the
  $k$-means algorithm in one dimension does not change the orders of
  centers, hence %
$c^{<1>}_1 \leq c^{<1>}_2 \leq
c^{<3>}_1$. By the definition of $T_{a,b,\epsilon}$, to prove
  that after the first step of $k$-means the centers $c^{<1>}$ are still
  in $T_{a,b,\epsilon}$ we have to check six constraints.  Due to
  space constraints, we only show how to prove that the first constraint $c^{<1>}_1 \geq
  \mu_1 - a$ is equivalent to Equation (\ref{eq:2}).
  The other conditions can be treated similarly.
  \newline

The update step of the $k$-means algorithm on the underlying
distribution readjusts the centers to the actual cluster
means:
$$c^{<1>}_1 = \frac{1}{\int_{-\infty}^{\frac{c^{<0>}_1 + c^{<0>}_2}{2}} f(x)} \int_{-\infty}^{\frac{c^{<0>}_1 + c^{<0>}_2}{2}} x f(x).$$
Thus, $c^{<1>}_1 \geq \mu_1 - a$ is equivalent to
$$\int_{-\infty}^{\frac{c^{<0>}_1 + c^{<0>}_2}{2}} (x - \mu_1 + a) f(x) \geq 0.$$
Moreover, the function $h \mapsto \int_{-\infty}^{h} (x - \mu_1 + a) f(x)$ is nondecreasing for $h \in [\mu_1 - a, +\infty)$.
Since $c^{<0>} \in T_{a,b,\epsilon}$ we know that $(c^{<0>}_1 + c^{<0>}_2)/2 \geq \mu_1 - a + \epsilon /2$ and thus the statement $\forall c^{<0>} \in T_{a,b,\epsilon}, c_1^1 \geq \mu_1 - a$ is equivalent to
$$\int_{-\infty}^{\mu_1 - a + \epsilon /2} (x - \mu_1 + a) f(x) \geq 0 .$$
We can now apply Eq.~\eqref{lemma:H} with the following decomposition
to get Eq.~(\ref{eq:2}):
\small{
\ba
& \int_{-\infty}^{\mu_1 - a + \epsilon /2} (x - \mu_1 + a) f(x) \\
& = w_1 \int_{-\infty}^{\mu_1 - a + \epsilon /2} (x - \mu_1 + a) \varphi_{\mu_1, \sigma}
+ w_2 \int_{-\infty}^{\mu_1 - a + \epsilon /2} (x - \mu_2 + \Delta + a) \varphi_{\mu_2, \sigma} .
\ea}

\end{proof}

A simple symmetry argument allows us to treat the stability of the symmetric prism.
\begin{proposition} \label{prop:sym} If $T_{a,b,\epsilon}$ is stable
  for the pdf $f(x) = w_1 \varphi_{\mu_1,\sigma} + w_2
  \varphi_{\mu_2,\sigma}$ and $\tilde{f}(x) = w_2
  \varphi_{\mu_1,\sigma} + w_1 \varphi_{\mu_2,\sigma}$,  then the same
  holds for $sym(T_{a,b,\epsilon})$.
\end{proposition}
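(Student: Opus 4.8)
The plan is to exploit the reflection symmetry of the two-component mixture about the midpoint $m = (\mu_1+\mu_2)/2$. Let $\rho(x) = \mu_1 + \mu_2 - x$ be the reflection about $m$. Since $\rho$ interchanges the two Gaussians, $\varphi_{\mu_1,\sigma}(\rho(x)) = \varphi_{\mu_2,\sigma}(x)$ and conversely, so that $f(\rho(x)) = \tilde{f}(x)$: reflecting space turns the density $f$ into $\tilde{f}$. On center triples I would introduce the reflect-and-reverse map $R(c) = (\rho(c_3), \rho(c_2), \rho(c_1))$; reversing the order is necessary because $\rho$ is orientation reversing, so that $R$ again maps the ordered cone $\{c_1 \leq c_2 \leq c_3\}$ into itself and the update remains well defined (recall that in one dimension the $k$-means step does not change the order of the centers). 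A direct check on the interval endpoints in the definitions of $T_{a,b,\epsilon}$ and $sym(T_{a,b,\epsilon})$ then shows that $R$ is an involution carrying $sym(T_{a,b,\epsilon})$ bijectively onto $T_{a,b,\epsilon}$, and hence $T_{a,b,\epsilon}$ back onto $sym(T_{a,b,\epsilon})$.

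The key step is an equivariance property of the one-step update map. Writing $\Psi_g$ for the infinite-sample $k$-means update with respect to a density $g$, I would prove the identity $\Psi_f(R(c)) = R(\Psi_{\tilde{f}}(c))$ for every ordered triple $c$. This follows from the substitution $x = \rho(y)$ in the update integrals: the cell boundary $(R(c)_1 + R(c)_2)/2$ of the reflected configuration equals $\rho$ of the boundary $(c_2+c_3)/2$, and after the change of variables the factor $f(\rho(y))$ becomes $\tilde{f}(y)$ while $x = \rho(y)$ contributes the affine shift $\mu_1+\mu_2$. Carried out coordinate by coordinate, this turns the first component of $\Psi_f(R(c))$ into $\rho$ of the third component of $\Psi_{\tilde{f}}(c)$, and analogously for the remaining two coordinates, which is exactly the claimed identity.

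Given the equivariance the conclusion is immediate. Take any $c \in sym(T_{a,b,\epsilon})$ and set $c' = R(c) \in T_{a,b,\epsilon}$. Since $R$ is an involution, $c = R(c')$, whence $\Psi_f(c) = \Psi_f(R(c')) = R(\Psi_{\tilde{f}}(c'))$. By hypothesis $T_{a,b,\epsilon}$ is stable for $\tilde{f}$, so $\Psi_{\tilde{f}}(c') \in T_{a,b,\epsilon}$, and applying $R$ yields $\Psi_f(c) \in R(T_{a,b,\epsilon}) = sym(T_{a,b,\epsilon})$. This is precisely stability of $sym(T_{a,b,\epsilon})$ for $f$; swapping the roles of $f$ and $\tilde{f}$ (and using $\tilde{\tilde{f}} = f$) gives stability for $\tilde{f}$ as well, so ``the same holds''. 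The step I expect to be the main obstacle is the bookkeeping in the equivariance identity: one must track the order reversal built into $R$ together with the fact that reflection converts $f$ into $\tilde{f}$. It is exactly this density swap that explains why the statement hypothesizes stability of $T_{a,b,\epsilon}$ for \emph{both} $f$ and $\tilde{f}$ rather than for $f$ alone.
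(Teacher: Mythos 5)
Your proof is correct and is essentially the paper's own argument made rigorous: the paper simply invokes invariance of $k$-means under translations and orientation reversal of the real line, which is precisely your reflection $\rho(x)=\mu_1+\mu_2-x$ together with the equivariance identity $\Psi_f\circ R = R\circ\Psi_{\tilde f}$ and the observation that reflection swaps $f$ and $\tilde f$ (hence the crossed hypotheses). You have just carried out explicitly the change of variables and the endpoint bookkeeping that the paper leaves implicit.
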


\begin{proof}
The $k$-means algorithm is invariant with respect to translation of the real axis as well as to changes in its orientation. Hence if $T_{a,b,\epsilon}$ is stable under $f$ (resp. $\tilde{f}$), so is $sym(T_{a,b,\epsilon})$ under $\tilde{f}(x) = w_2 \varphi_{\mu_1,\sigma} + w_1 \varphi_{\mu_2,\sigma}$ (resp. $f$).
\end{proof}

\begin{corollary}[Instability for $K'=3$] \label{coro:num3}
Assume that $\min(w_1,w_2)=0.2$ and $\Delta = 14.5 \sigma$. Assume that we have an initialization scheme satisfying:
 \begin{itemize}
   \item with probability at least $(1-\delta)/2$ we have 2 initial centers within $2.5 \sigma$ of $\mu_1$ and 1 initial center within $2.5 \sigma$ of $\mu_2$
   \item with probability at least $(1-\delta)/2$ we have 1 initial centers within $2.5 \sigma$ of $\mu_1$ and 2 initial centers within $2.5 \sigma$ of $\mu_2$
 \end{itemize}
Then $k$-means is instable: with probability $(1-\delta)/2$ it will converge to a solution with two centers within $3.5 \sigma$ of $\mu_1$ and with probability $(1-\delta)/2$ to a solution with two centers within $3.5 \sigma$ of $\mu_2$.
\end{corollary}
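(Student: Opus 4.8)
The plan is to mirror the numerical argument used for Corollary~\ref{coro:num2}, but now exploiting \emph{two} stable regions rather than one, so that the randomness of the initialization produces two genuinely different outcomes. First I would fix $a = 3.5\sigma$ together with suitable $b$ and $\epsilon$, and argue that the prism $T_{a,b,\epsilon}$ contains the event ``two initial centers within $2.5\sigma$ of $\mu_1$ and one within $2.5\sigma$ of $\mu_2$''. Since $\Delta = 14.5\sigma > 5\sigma$, after sorting the three centers in increasing order the two centers near $\mu_1$ necessarily precede the one near $\mu_2$ (because $\mu_1 + 2.5\sigma < \mu_2 - 2.5\sigma$), so the ordering constraint $c_1 \le c_2 \le c_3$ holds automatically. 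The two $\mu_1$-centers lie in $[\mu_1-2.5\sigma, \mu_1+2.5\sigma]$, which is contained in both coordinate windows $[\mu_1-a,\mu_1+a-\epsilon]$ and $[\mu_1-a+\epsilon,\mu_1+a]$ as soon as $a-\epsilon \ge 2.5\sigma$, i.e. $\epsilon \le \sigma$; the $\mu_2$-center lies in $[\mu_2-b,\mu_2+b]$ once $b \ge 2.5\sigma$. Hence the $2.5\sigma$-initialization sits inside $T_{a,b,\epsilon}$, and symmetrically the other initialization sits inside $sym(T_{a,b,\epsilon})$.

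Next I would invoke Proposition~\ref{th:stabregion} to reduce stability of $T_{a,b,\epsilon}$ to the six inequalities~(\ref{eq:2})--(\ref{eq:7}) and simply verify them numerically for $\Delta = 14.5\sigma$, the chosen $a,b,\epsilon$, and both weight assignments $(w_1,w_2) = (0.2, 0.8)$ and $(w_1,w_2) = (0.8, 0.2)$, exactly as in Corollary~\ref{coro:num2} where the swapped weights are also checked. Checking both assignments establishes that $T_{a,b,\epsilon}$ is stable under both $f$ and $\tilde f$, which is precisely the hypothesis of Proposition~\ref{prop:sym}; that proposition then delivers stability of $sym(T_{a,b,\epsilon})$ under $f$. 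By induction on the one-step implication~(\ref{eq:1}) and its symmetric counterpart, every iterate, and hence the final solution, stays inside whichever prism the initialization landed in.

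Assembling the probabilistic statement is then immediate: with probability at least $(1-\delta)/2$ the initialization lies in $T_{a,b,\epsilon}$ and the algorithm converges to a solution whose two $\mu_1$-centers are trapped within $a = 3.5\sigma$ of $\mu_1$; with probability at least $(1-\delta)/2$ it lies in $sym(T_{a,b,\epsilon})$ and converges with two centers within $3.5\sigma$ of $\mu_2$. These two outcomes correspond to the two distinct configurations described in the introduction, hence to different clusterings (one splits the first Gaussian, the other the second), which is the asserted instability. The main obstacle is the joint choice of $a,b,\epsilon$: the gap $\epsilon$ must be strictly positive to supply the ordering margin that the stability inequalities rely on, yet small enough ($\epsilon \le \sigma$) that the $2.5\sigma$-initialization, in which the two $\mu_1$-centers may coincide, still fits inside $T_{a,b,\epsilon}$ with $a = 3.5\sigma$. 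One must exhibit concrete values for which the six inequalities~(\ref{eq:2})--(\ref{eq:7}) hold simultaneously for both weight pairs, and it is precisely to make these hold that the separation is inflated from $7\sigma$ (the stable $K'=2$ case) to $14.5\sigma$.
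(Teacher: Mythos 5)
Your proposal is correct and follows essentially the same route as the paper: the paper's proof also checks Equations~(\ref{eq:2})--(\ref{eq:7}) numerically for $a=3.5\sigma$, $b=2.5\sigma$, $\epsilon=\sigma$, $\Delta=14.5\sigma$ with both weight assignments ($w_1=0.2$ and $w_2=0.2$), and then invokes Proposition~\ref{th:stabregion} together with Proposition~\ref{prop:sym} to conclude that $T_{3.5\sigma,2.5\sigma,\sigma}$ and $sym(T_{3.5\sigma,2.5\sigma,\sigma})$ are stable regions. Your additional bookkeeping (showing the $2.5\sigma$-initializations land inside the prisms, the ordering argument, and the iteration of the one-step stability) is exactly the implicit content the paper leaves unstated.
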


\begin{proof}
We simply check numerically that for $a= 3.5 \sigma$, $b = 2.5
  \sigma$, $\epsilon= \sigma$, $\Delta = 14.5 \sigma$ and $w_1=0.2$ (we also check $w_2=0.2$)
Equations (\ref{eq:2}) - (\ref{eq:7}) are true. Then by Proposition \ref{th:stabregion} and Proposition \ref{prop:sym} we know that $T_{3.5 \sigma, 2.5 \sigma, \sigma}$ and its symmetric $sym(T_{3.5 \sigma, 2.5 \sigma, \sigma})$ are stable regions which implies the result.
\end{proof}

\section{Towards more general results: the geometry of the solution
  space of $k$-means}

In the section above we proved by a level set approach that in a very simple setting, 
if we initialize the $k$-means algorithm ``close enough'' to the true cluster centers, then the
initial centers do not move between clusters. However we would like to obtain this result in a more
general setting. We believe that to achieve this goal in a
systematic way one has to understand the structure of the solution
space of $k$-means. We identify the solution space with the space
$\R^{dK'}$ by representing a set of $K'$ centers $c_1, ..., c_{K'} \in \R^d$ as
a point $c$ in the space $\R^{d K'}$.  Our goal in this section is to
understand the ``shape'' of the $k$-means objective function on this
space. Secondly, we want to understand how the $k$-means algorithm
operates on this space. That is, what can we say about the
``trajectory'' of the $k$-means algorithm from the initial point to
the final solution? For simplicity, we state some of the results in
this section only for the case where the data space is one
dimensional. They also hold in $\R^d$, but are more nasty to write up.  \\

First of all, we want to compute the derivatives of $W_n$ with respect
to the individual centers.

\begin{proposition}[Derivatives of $k$-means] \label{prop-derivatives}
Given a finite data set $X_1, ..., X_n \in \R$. For $k, l \in \{1, ...,
K'\}$ and $i \in \{1, ..., n\}$ consider the hyperplane in $\R^{K'}$ which
is defined by
\ba
H_{k, l, i} := \{ c \in \R^{K'} : X_i = (c_k + c_l) / 2 \}.
\ea
Define the set $H := \cup_{k, l = 1}^{K'} \cup_{i=1}^n H_{k,l,i}$.
Then we have:
\begin{enumerate}
\item
$W_n$ is differentiable on $\R^{K'} \setminus H$ with partial derivatives
\ba
\frac{\partial W_n(c) }{\partial c_k}
= \sum_{i: \; X_i \in \Ccal_k} (c_k - X_i).
\ea

\item The second partial derivatives of $W_n$ on $\R^{K'} \setminus H$
  are
\ba
\frac{\partial W_n(c) }{\partial c_k \partial c_l } = 0
&& \text{ and } &&  \frac{\partial W_n(c) }{\partial c_k \partial c_k } = N_k.
\ea
\item The third derivatives of $W_n$ on $\R^{K'} \setminus H$ all vanish.
\end{enumerate}
\end{proposition}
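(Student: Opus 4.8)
\textbf{The plan} is to show that on the open set $\R^{K'}\setminus H$ the assignment of each data point to its nearest center is \emph{locally constant}, so that in a neighbourhood of any such point $W_n$ agrees with a fixed quadratic polynomial in $c$; all three parts then drop out by differentiating that polynomial. So the whole proposition reduces to one structural fact (local constancy of the clustering) plus a trivial computation.

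I would first establish that step. Fix $c^\ast\in\R^{K'}\setminus H$. Away from configurations where two centers coincide, every $X_i$ has a \emph{unique} nearest center: if two distinct indices $k\neq l$ both minimized $\mu\mapsto(c^\ast_\mu-X_i)^2$, then $|c^\ast_k-X_i|=|c^\ast_l-X_i|$, and for $c^\ast_k\neq c^\ast_l$ this forces $X_i=(c^\ast_k+c^\ast_l)/2$, i.e.\ $c^\ast\in H_{k,l,i}$, contradicting $c^\ast\notin H$. Writing $k^\ast(i)$ for the unique minimizer, the second-smallest of the values $(c^\ast_\mu-X_i)^2$ strictly exceeds the smallest, so there is a positive gap; since there are only finitely many points and indices and each $c\mapsto(c_\mu-X_i)^2$ is continuous, one neighbourhood $U$ of $c^\ast$ keeps $k^\ast(i)$ the strict argmin for all $i$ simultaneously. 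Setting $\Ccal_k:=\{i:k^\ast(i)=k\}$ and $N_k:=|\Ccal_k|$ (the clusters and counts at $c^\ast$), on $U$ we have $W_n(c)=\tfrac12\sum_{k=1}^{K'}\sum_{i\in\Ccal_k}(c_k-X_i)^2$, a polynomial of degree two in $c$.

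Given this local identity the three parts are immediate and I would just read them off. Differentiating in $c_k$ leaves only the $\Ccal_k$ block and yields $\partial W_n/\partial c_k=\sum_{i\in\Ccal_k}(c_k-X_i)$ (part 1). A further derivative in $c_l$ with $l\neq k$ annihilates that block, giving $0$, whereas a second derivative in $c_k$ returns $\sum_{i\in\Ccal_k}1=N_k$ (part 2). Finally, a quadratic has no cubic terms, so all third derivatives vanish (part 3). Since $c^\ast$ was an arbitrary point of $\R^{K'}\setminus H$, these formulas hold throughout that set.

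The step I expect to be the genuine obstacle is exactly the uniqueness of the nearest center in the degenerate case $c^\ast_k=c^\ast_l$: there $X_i$ is equidistant to the two centers irrespective of its position, yet this locus is \emph{not} recorded by $H_{k,l,i}$ as defined (which only captures $X_i=(c^\ast_k+c^\ast_l)/2$). A direct check (e.g.\ $n=1$, $X_1=0$, $K'=2$, $c=(1,1)$, where $W_n=\tfrac12\min(c_1^2,c_2^2)$ has a gradient that jumps between $(1,0)$ and $(0,1)$) shows $W_n$ really fails to be differentiable where two centers coincide off a data point. To make the statement airtight I would therefore either intersect $\R^{K'}\setminus H$ with the complement of the coincidence set $\bigcup_{k<l}\{c_k=c_l\}$ or enlarge $H$ to include it; once such configurations are excluded the argmin is genuinely locally constant and the computation above goes through verbatim.
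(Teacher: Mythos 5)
Your proof is essentially the paper's own argument, written out more carefully: the paper likewise establishes the formulas by observing that for $c \notin H$ no data point lies on a cluster boundary, so the assignment of points to centers does not change under a small perturbation of $c_k$, and then evaluates the difference quotient of the resulting (locally) quadratic function; it does this explicitly only for the first derivative and states that the higher ones follow similarly. Your local-constancy lemma followed by reading the derivatives off the fixed quadratic is the same mechanism, just made uniform over all coordinates at once.

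The one place you go beyond the paper is your final paragraph, and you are right to do so: this is a genuine defect in the proposition as stated, which the paper's proof glosses over. The claim that ``$c \notin H$ implies no data point lies on the boundary between two centers'' tacitly assumes the centers are pairwise distinct. When $c_k = c_l$, every data point is equidistant from centers $k$ and $l$, yet $H_{k,l,i}$ records only the midpoint condition $X_i = (c_k + c_l)/2 = c_k$. Your example ($n=1$, $X_1 = 0$, $K'=2$, $c=(1,1)$, where $W_n = \tfrac{1}{2}\min(c_1^2, c_2^2)$ has left and right partial derivatives $1$ and $0$ in $c_1$) shows $W_n$ really is non-differentiable at such points, so the statement fails on the coincidence set $\bigcup_{k<l}\{c : c_k = c_l\}$, which is not contained in $H$. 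Your fix --- enlarging $H$ by this set, or equivalently restricting the domain to its complement --- is the correct one, and it is harmless for the rest of the paper: the enlarged exceptional set is still a finite union of hyperplanes, so the later picture of $\R^{K'}$ partitioned into convex cells on which $W_n$ is quadratic survives verbatim.
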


\begin{proof}
First of all, note that the sets $H_{k,l,i}$ contain the center
vectors for which there exists a data point $X_i$ which lies on the
boundary of two centers $c_k$ and $c_l$. Now let us look at the first
derivative. We compute it by foot:
\ba
\frac{\partial W_n(c)}{\partial c_{k}} =
&
\lim_{h \to 0} \frac{1}{h} (W_n(c_1, ..., c_K) - W_n(c_1, ..., c_{k} + h, ..., c_K))
\ea
When $c \not\in H$ we know that no data point lies on the boundary
between two cluster centers. Thus, if $h$ is small enough, the
assignment of data points to cluster centers does not change if we
replace $c_k$ by $c_k +h$. With this property, the expression above is
trivial to compute and yields the first derivative, the other derivatives follow similarly.
\end{proof}

A straightforward consequence  is as follows:
\begin{proposition}[$k$-means does Newton iterations] \label{prop:newton}
The update steps performed by the $k$-means algorithms are exactly the same as
update steps by a Newton optimization.
\end{proposition}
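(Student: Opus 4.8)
The plan is to read off the gradient and Hessian of $W_n$ from Proposition~\ref{prop-derivatives} and to observe that, because the Hessian is diagonal and the third derivatives vanish, a single full Newton step collapses exactly to the center-of-mass re-adjustment \eqref{kmeans-step2}.

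First I would fix a configuration $c = c^{<t>}$ lying off the exceptional set $H$, so that the induced assignment $\Ccal_k(c)$ and the counts $N_k = N_k(c)$ are locally constant. By Proposition~\ref{prop-derivatives}, the gradient of $W_n$ at such a point has components $\partial W_n / \partial c_k = \sum_{i:\, X_i \in \Ccal_k} (c_k - X_i)$, while the Hessian is the diagonal matrix $\mathrm{diag}(N_1, \dots, N_{K'})$, since every mixed second partial vanishes. As long as each cluster is nonempty this Hessian is invertible, with inverse $\mathrm{diag}(1/N_1, \dots, 1/N_{K'})$. The Newton update from $c^{<t>}$ is $c^{<t+1>} = c^{<t>} - [\nabla^2 W_n(c^{<t>})]^{-1}\, \nabla W_n(c^{<t>})$, and because the Hessian is diagonal this decouples across coordinates, so the $k$-th component is
\[
c_k^{<t+1>} = c_k^{<t>} - \frac{1}{N_k} \sum_{i:\, X_i \in \Ccal_k} \bigl(c_k^{<t>} - X_i\bigr).
\]
Expanding the sum as $N_k\, c_k^{<t>} - \sum_{i:\, X_i \in \Ccal_k} X_i$, the terms in $c_k^{<t>}$ cancel and one is left with $c_k^{<t+1>} = \tfrac{1}{N_k} \sum_{i:\, X_i \in \Ccal_k} X_i$, which is exactly the re-adjustment step \eqref{kmeans-step2}. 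I would then add the remark that this identification is stronger than a mere coincidence of formulas: since the third derivatives of $W_n$ vanish on $\R^{K'} \setminus H$ (part 3 of Proposition~\ref{prop-derivatives}), the second-order Taylor model of $W_n$ at $c^{<t>}$ is \emph{exact} throughout the region where the assignment stays constant, so the Newton step lands precisely on the minimizer of that local quadratic rather than merely approximating it.

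The only genuine care needed is the bookkeeping about assignments. The gradient and Hessian expressions presuppose $c^{<t>} \notin H$, and one must verify that the partition $\Ccal_k(c^{<t>})$ used to build these derivatives is the same partition produced by Step~1 of the $k$-means iteration immediately before the mean update of Step~2. This is immediate, because Step~1 assigns each $X_i$ to its nearest current center, which is exactly the partition defining $\Ccal_k(c^{<t>})$ away from the boundary hyperplanes collected in $H$. I expect this matching of assignments — together with the measure-zero caveat that the whole statement is asserted only off the non-differentiability set $H$ — to be the subtle point of the argument, rather than any of the algebra above.
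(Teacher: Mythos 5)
Your proof is correct and follows exactly the route the paper takes: the paper's own proof simply asserts that the result ``follows directly from Proposition~\ref{prop-derivatives}, the definition of the Newton iteration on $W_n$ and the definition of the $k$-means update step,'' and your argument is precisely that computation spelled out --- diagonal Hessian $\mathrm{diag}(N_1,\dots,N_{K'})$, gradient $\sum_{i:\,X_i\in\Ccal_k}(c_k-X_i)$, and the resulting Newton step collapsing to the cluster means. The additional observations you make (exactness of the quadratic model off $H$, and matching the assignment in Step~1 with the partition defining the derivatives) are sound and only make explicit what the paper leaves implicit.
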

\begin{proof}
This proposition follows directly from Proposition
\ref{prop-derivatives}, the definition of the Newton iteration on
$W_n$ and the definition of the $k$-means update step. This fact has
also been stated (less rigorously and without proof) in
\citet{BotBen95}.
\end{proof}

Together, the two propositions show an interesting picture. We have
seen in Proposition \ref{prop-derivatives} that the $k$-means
objective function $W_n$ is differentiable on $\R^{K'} \setminus H$. This
means that the space $\R^{K'}$ is separated into many cells with
hyperplane boundaries $H_{k,l, i}$.  By construction, the cells are
convex (as they are intersections of half-spaces). 
Our finding  means that each data set $X_1, ..., X_n$
induces a partitioning of this solution space into convex
cells. 
To avoid confusion,
at this point we would like to stress again that we are not looking at
a fixed clustering solution on the data space (which can be described
by cells with hyperplane boundaries, too), but at the space of all center vectors
$c$. It is easy to see that all centers
$c$ within one cell correspond to exactly one clustering of the data
points. As it is well known that the $k$-means algorithm never visits
a clustering twice, we can conclude that each cell is visited at most
once by the algorithm.Within each cell, $W_n$ is quadratic (as the
third derivatives vanish). Moreover, we know that $k$-means behaves as
the Newton iteration. On a quadratic function, the Newton optimization
jumps in one step to the minimum of the function. This means that if
$k$-means enters a
cell that contains a local optimum of the $k$-means objective function,
then the next step of $k$-means jumps to this local optimum
and stops. \\

Now let us look more closely at the trajectories of the $k$-means
algorithm. The paper by \citet{ZhaDaiTun08} inspired us to derive the following property.

\begin{proposition}[Trajectories of $k$-means] \label{prop:traj}
Let $c^{<t>}$ and $c^{<t+1>}$ be two consecutive solutions visited by
the $k$-means algorithm. Consider the line connecting those two
solutions in $\R^{K'}$, and let
$c^{\alpha} = (1-\alpha) c^{<t>} + \alpha c^{<t+1>}$ be a point on
this line (for some $\alpha
\in [0,1]$). Then
$W_n(c^{\alpha} ) \leq W_n(c^{<t>})$.
\end{proposition}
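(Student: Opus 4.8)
The plan is to exploit the structure established in Propositions~\ref{prop-derivatives} and~\ref{prop:newton}: within each convex cell of $\R^{K'} \setminus H$, the objective $W_n$ is quadratic (its third derivatives vanish), and the $k$-means update is exactly a Newton step. Since a Newton step on a quadratic jumps directly to the unique minimizer of that quadratic, the successor $c^{<t+1>}$ is the global minimizer of the quadratic extension $Q$ of $W_n$ agreeing with $W_n$ on the cell containing $c^{<t>}$. The key observation is that $Q$ is a \emph{convex} quadratic: by part~(2) of Proposition~\ref{prop-derivatives}, its Hessian is diagonal with entries $N_k \geq 0$, hence positive semidefinite. So the whole claim reduces to a convexity statement about $Q$ along the segment from $c^{<t>}$ to its minimizer.

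First I would fix the cell $C$ containing $c^{<t>}$ and let $Q$ be the quadratic function that coincides with $W_n$ on $C$; concretely $Q(c) = \tfrac{1}{2}\sum_k N_k (c_k - m_k)^2 + \text{const}$, where $m_k$ is the mean of the points assigned to center $k$ in that cell. By Proposition~\ref{prop:newton} the Newton step sends $c^{<t>}$ to the minimizer $m = (m_1, \dots, m_{K'})$ of $Q$, and this minimizer is exactly $c^{<t+1>}$. Next I would write $c^{\alpha} = (1-\alpha) c^{<t>} + \alpha\, c^{<t+1>}$ and consider the single-variable function $g(\alpha) = Q(c^{\alpha})$ on $[0,1]$. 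Since $Q$ is convex and $c^{<t+1>}$ minimizes it, $g$ is a convex parabola attaining its minimum at $\alpha = 1$ (or a flat minimum containing it), so $g$ is nonincreasing on $[0,1]$; in particular $g(\alpha) \leq g(0)$, i.e. $Q(c^{\alpha}) \leq Q(c^{<t>}) = W_n(c^{<t>})$.

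The remaining step, and the one I expect to be the main obstacle, is passing from the statement about the quadratic surrogate $Q$ back to the true objective $W_n$. The inequality I want is $W_n(c^{\alpha}) \leq W_n(c^{<t>})$, but $c^{\alpha}$ need not lie in the cell $C$ on which $W_n = Q$, so $W_n(c^{\alpha})$ and $Q(c^{\alpha})$ may differ. Here I would use the fact that $W_n(c) = \tfrac{1}{2}\sum_i \min_k \lVert c_k - X_i \rVert^2$ is a pointwise minimum over the assignment-specific quadratics, while $Q$ is the single quadratic corresponding to one fixed assignment; therefore $W_n \leq Q$ everywhere, with equality on $C$. Combining the two facts gives the chain
\ba
W_n(c^{\alpha}) \;\leq\; Q(c^{\alpha}) \;\leq\; Q(c^{<t>}) \;=\; W_n(c^{<t>}),
\ea
where the first inequality is the global domination $W_n \leq Q$, the second is the monotonicity of $g$, and the equality holds because $c^{<t>} \in C$.

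Finally I would note the minor technical points: the diagonal Hessian entries $N_k$ are nonnegative, so convexity of $Q$ is guaranteed even when some cluster is empty (in which case $W_n$ is constant in that coordinate and the corresponding term simply drops out), and the argument is stated for $d=1$ but carries over to $\R^d$ coordinatewise. The crux is entirely the domination inequality $W_n \leq Q$ together with the convex-parabola monotonicity; the Newton/quadratic structure from the earlier propositions does all the heavy lifting, so no genuinely new computation is required.
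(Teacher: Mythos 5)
Your proof is correct, and it takes a route that is a genuine variant of the paper's, not a copy of it. Both arguments pivot on the same surrogate: the fixed-assignment cost $Q(c)=\tfrac12\sum_k\sum_{i\in\Ccal_k(c^{<t>})}\|X_i-c_k\|^2$, which dominates $W_n$ everywhere and equals it at $c^{<t>}$ (your domination inequality is exactly the paper's first inequality, ``assigning points to the closest centers is best''). The paths then diverge: the paper bounds $Q(c^{\alpha})$ by the convex combination of the endpoint costs using convexity of $\|\cdot\|^2$, and then tries to finish through $W_n(c^{<t+1>})$ and the monotonicity of the algorithm; you instead observe that $Q$ is a convex quadratic whose exact global minimizer is $c^{<t+1>}$ (the Newton-step content of Propositions \ref{prop-derivatives} and \ref{prop:newton}), so $Q$ is nonincreasing along the segment, giving $W_n(c^{\alpha})\le Q(c^{\alpha})\le Q(c^{<t>})=W_n(c^{<t>})$ in one stroke. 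Your variant is in fact the more watertight of the two: the paper's third displayed inequality would require $Q(c^{<t+1>})\le W_n(c^{<t+1>})$, whereas the true relation is $Q(c^{<t+1>})\ge W_n(c^{<t+1>})$ (the old assignment is generally suboptimal at the new centers), and its closing remark states the monotonicity as $W_n(c^{<t>})\le W_n(c^{<t+1>})$, which is backwards; both slips are repaired precisely by the ingredient you use, namely that $c^{<t+1>}$ minimizes $Q$, hence $Q(c^{<t+1>})\le Q(c^{<t>})$. What the paper's (intended) elementary route buys is self-containedness: it never mentions cells or Newton steps and reads verbatim in $\R^d$. What your route buys is a cleaner one-line monotonicity argument and a slightly stronger conclusion, namely that the surrogate $Q$ itself decreases along the entire segment, not just $W_n$ at its endpoints versus the start. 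Your side remarks (empty clusters via $N_k\ge 0$, the fact that $c^{\alpha}$ may leave the cell being handled by global domination $W_n\le Q$, and the coordinatewise extension to $\R^d$) are all sound.
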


\begin{proof}
The following inequalities hold true:
\ba
W_n(c^{\alpha}) & = \frac{1}{2} \sum_{k=1}^K \sum_{i \in \Ccal_k(c^{\alpha})} ||X_i - c_k^{\alpha}||^2 \\
 & \leq \frac{1}{2} \sum_{k=1}^K \sum_{i \in \Ccal_k(c^{t})} ||X_i - c_k^{\alpha}||^2 \\
 & \leq \frac{1}{2} \sum_{k=1}^K \sum_{i \in \Ccal_k(c^{t})} \alpha ||X_i - c_k^{t}||^2 + (1-\alpha) ||X_i - c^{t+1}_k||^2 \\
 & \leq \alpha W_n(c^t) + (1-\alpha) W_n(c^{t+1})
 \ea
 For the first and third inequality we used the fact that
 assigning points in $\Ccal_k(c)$ to the center $c_k$ is the best thing
 to do to minimize $W_n$.
 For the second inequality we used that $x
 \rightarrow ||x||^2$ is convex. The proof is concluded by noting that
 $W_n(c^{<t>}) \leq W_n(c^{<t+1>})$.
\end{proof}

We believe that the properties of the $k$-means objective function and
the algorithm are the key to prove more general stability
results. However, there is still an important piece
missing, as we are going to explain now.
Since $k$-means performs Newton iterations on $W_n$, one could expect
to get information on the trajectories in the configuration space by
using a Taylor expansion of $W_n$. However, as we have seen above,
each step of the $k$-means algorithm crosses one of the hyperplanes
$H_{k,l,i}$ on which $W_n$ is non-differentiable. Hence, a direct Taylor
expansion approach on $W_n$ cannot work. On the other hand,
surprisingly one can prove that
the limit objective function $W := \lim \frac{1}{n} W_n$ is almost surely a
continuously differentiable function on $\R^{K'}$ (we omit the proof in
this paper). Thus one may hope that one could first study the
behavior of the algorithm for $W$, and then apply concentration
inequalities to carry over the results to $W_n$.  Unfortunately, here
we face another problem: one can prove that in the limit case, a step
of the $k$-means algorithm  is {\em not} a Newton iteration on $W$.
\newline

Proposition \ref{prop:traj} directly evokes a scheme to design stable
regions. Assume that we can find two regions $A \subset B \subset \R^{K'}$ of full
rank and such that
\begin{equation} \label{eq:poly}
\max_{x \in \partial A} W_n(x) \leq \min_{x \in \partial B} W_n(x).
\end{equation}
Then, if we initialize in $A$ we know that we will converge to a
configuration in $B$. This approach sounds very promising. However, we
found that it was impossible to satisfy both Equation (\ref{eq:poly})
and the constraint that $A$ has to be "big enough" so that we
initialize in $A$ with high probability. \\ %

Finally, we would like to elaborate on a few more complications
towards more general results:

\blobb{On a high level, we want to prove that if $K'$ is slightly
  larger than the true $K$, then $k$-means is instable. On the other
  hand, if $K'$ gets close to the number $n$ of data points, we
  trivially have stability again. Hence, there is some kind of
  ``turning point'' where the algorithm is most instable. It will be
  quite a challenge to work out how to determine this turning point. }

\blobb{Moreover, even if we have so many data points that the above
  problem is unlikely to occur, our analysis breaks down if $K'$ gets
  too large. The reason is that if $K'$ is much bigger than $K$, then
  we cannot guarantee any more that initial centers will be in stable
  regions. Just the opposite will happen: at some point we will have
  outliers as initial centers, and then the behavior of the algorithm
  becomes rather unpredictable. }

\blobb{Finally, consider the case of $K' < K$. As we have already
  mentioned in the introduction, in this case it is not necessarily
  the case that different initial configurations lead to different
  clusterings. Hence, a general statement on (in)stability is not
  possible in this case. This also means that the tempting conjecture
  ``the true $K$ has minimal stability'' is not necessarily true. }

\section{An initialization algorithm and its analysis} \label{sec-init}

\begin{figure}[t]
\begin{small}
\begin{boxit}
Algorithm \initalg
\begin{enumerate}
\item[]\hspace{\backitem}Input: $w_{min}$,  number of centers $K'$
\item \label{step:pick0}Initialize with $L$ random points
  $\cenzero_{1:L}$, $L$ computed by (\ref{eq:Lp0})
\item Run one step of $k$-means, that is
\begin{enumerate}
\item \label{step:assign0} To each center $\cenzero_j$ assign region $\Ccal^0_j, \,j=1:L$
\item \label{step:centers1}Calculate $\cenunu_{1:L}$ as  the centers of
  mass of regions $\Ccal^0_{1:L}$
\end{enumerate}
\item \label{step:prune}Remove all centers $\cenunu_j$ for which $P[\Ccal^1_j]\leq p_0$, where $p_0$ is given by (\ref{eq:Lp0}). We are left with $\cenunu_{j'},\,j'=1:L'$.
\item \label{step:HS}Choose $K'$ of the remaining centers by the {\sc MinDiam} heuristic \benum
\item \label{step:inHS} Select one center at random.
\item Repeat until $K'$ centroids are selected:
\item[] Select the centroid $\cenunu_q$ that maximizes the minimum distance to the already selected centroids.
\eenum
\item[]\hspace{\backitem} Output: the $K'$ selected centroids $\cenunu_{k},\,k=1:K'$
\end{enumerate}
\end{boxit}
\end{small}
\caption{ \label{fig:init-alg}
The  \initalg~ initialization}
\end{figure}

We have seen that one can prove results on clustering stability for
$k$-means if we use a "good" initialization scheme which tends to
place initial centers in different Gaussians.
We now show that an established initialization algorithm, the
\initalg~ initialization described in Figure \ref{fig:init-alg} has
this property, i.e it has the effect of placing the initial centroids
in disjoint, bounded neighborhoods of the means $\mu_{1:K}$. This often
rediscovered algorithm is credited to \citet{hochbaum:85}. In
\citet{dasgupta:07} it was analyzed it in the context of the EM
algorithm. Later \citet{SreShaRow06} used it in experimental
evaluations of EM, and it was found to have a significant advantage
w.r.t more naive initialization methods in some cases.  While this and
other initializations have been extensively studied in conjunction
with EM, we are not aware of any studies of \initalg~ for $k$-means.

We make three necessary conceptual assumptions. Firstly to ensure that $K$ is
well-defined we assume that the mixture weights are bounded below by a known
weight $w_{min}$.
\begin{assumption}\label{ass:wmin} $w_k\geq w_{min}$ for all k.
\end{assumption}
We also require to know a lower bound $\Delta$ and an upper bound
$\Delta_{max}$ on the separation between two Gaussians, and we
 assume that these separations are
``sufficiently large''.
In addition, later we shall make several technical assumptions
related to a parameter $\tau$ used in the proofs, which also amount
to conditions on the separation.  These assumptions shall be made precise later.

\begin{theorem}[\initalg~ Initialization]\label{thm:initialization0} Let
$f=\sum_1^Kw_k\phi_{\mu_k,1}$ be a mixture of $K$ Gaussians with
centers $\mu_{1:K},\,\mu_k\leq\mu_{k+1}$, and unit variance. Let
$\tau\in(0,0.5),\,\delta_{miss}>0,\,\delta_{impure}$ defined in
Proposition \ref{thm:prob-impure}. If we run
Algorithm \initalg~ with any $2 \leq K' \leq {1}/{w_{min}}$, then,
subject to Assumptions 1, 2, 3, 4, 5 (specified later), with probability
$1-2\delta_{miss}-\delta_{impure}$  over the
initialization there exist $K$ disjoint intervals $\tilde{A}_k$,
specified in Section \ref{sec:step4},
one for each true mean $\mu_k$, so that all $K'$ centers
$\cenunu_{k'}$ are contained in $\bigcup_k \tila_k$ and
\begin{eqnarray}
\mbox{if } K'=K, \mbox{ each } \tila_k \mbox{ will contain exactly one center } \cenunu_{k'}, \\
\mbox{if } K'<K, \mbox{ each } \tila_k \mbox{ will contain at most one center } \cenunu_{k'}, \\
\mbox{if } K'>K, \mbox{ each } \tila_k \mbox{ will contain at least one center } \cenunu_{k'}.
\end{eqnarray}
\end{theorem}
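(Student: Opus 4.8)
The plan is to follow the four stages of Algorithm \initalg~ in order, tracking where the candidate centroids can lie after each stage, and to collect the failure probabilities into the stated bound $1-2\delta_{miss}-\delta_{impure}$. Throughout I fix the neighborhoods $\tila_k$ of each $\mu_k$ (as constructed in Section \ref{sec:step4}) and introduce two geometric quantities: an upper bound $d_{in}$ on the diameter of any single $\tila_k$, and a lower bound $d_{out}$ on the gap separating distinct $\tila_k$'s. The separation assumptions (Assumptions 2--5) should be arranged precisely so that $d_{out}>d_{in}$, which is the only geometric input the final selection step requires.

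First I would treat Steps \ref{step:pick0}--\ref{step:centers1}. Using a coupon-collector/concentration argument I would calibrate $L$ through (\ref{eq:Lp0}) so large that, outside an event of probability at most $2\delta_{miss}$, every true component receives at least one of the $L$ random points $\cenzero_{1:L}$ within a controlled radius of its mean. On this \emph{no-miss} event, after the single $k$-means step at least one region $\Ccal^0_j$ concentrates on each Gaussian, so each $\tila_k$ is guaranteed to contain one of the recomputed centroids $\cenunu_{j'}$.

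The crux is the second part, establishing \emph{purity}: after the pruning in Step \ref{step:prune} every surviving centroid $\cenunu_{j'}$ lies in a single $\tila_k$. The threat is a region $\Ccal^0_j$ whose Voronoi cell straddles the midpoint between two adjacent means, since its center of mass can then land in the low-density gap, outside every $\tila_k$. I would invoke Proposition \ref{thm:prob-impure} to argue that, outside an event of probability $\delta_{impure}$, any such impure region carries mass at most $p_0$ and is therefore discarded by the pruning threshold, while the separation assumptions force any region capturing the bulk of a single Gaussian to have mass exceeding $p_0$ and center of mass inside $\tila_k$. Intersecting the no-miss and purity events, after Step \ref{step:prune} all $L'$ surviving centroids lie in $\bigcup_k\tila_k$ with every $\tila_k$ nonempty, and this holds with probability at least $1-2\delta_{miss}-\delta_{impure}$. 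I expect this step to be the main obstacle, because it is precisely here that the separation assumptions and the joint calibration of $L$ and $p_0$ in (\ref{eq:Lp0}) must be tuned to cleanly separate good centroids from impure or low-mass outlier ones.

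Finally I would analyze the {\sc MinDiam} traversal (Steps \ref{step:HS}--\ref{step:inHS}) as a purely combinatorial statement on the good event above. The key invariant is that whenever some $\tila_k$ contains no selected centroid, the minimum distance from any candidate in that cluster to the current selected set is at least $d_{out}$, whereas a candidate lying in an already-represented cluster is within $d_{in}<d_{out}$ of the centroid selected from it; hence after the initial random pick the greedy maximizer always chooses from an unrepresented cluster until all $K$ are filled. This yields the three cases at once: for $K'\le K$ every pick lands in a fresh $\tila_k$, giving at most one center per interval and exactly one when $K'=K$ after $K$ picks; for $K'>K$ the first $K$ picks fill all clusters, so each $\tila_k$ receives at least one center while the remaining $K'-K$ picks fall into already-occupied intervals. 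The hypothesis $K'\le 1/w_{min}$, together with $K\le 1/w_{min}$ (forced by $w_k\ge w_{min}$ and $\sum_k w_k=1$), ensures that enough centroids of mass exceeding $p_0$ survive pruning for $K'$ of them to be selectable, keeping the bookkeeping consistent.
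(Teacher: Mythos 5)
Your proposal is correct and follows essentially the same route as the paper: Proposition \ref{lem:pick0} for the no-miss/no-small-cluster event (cost $2\delta_{miss}$), Proposition \ref{thm:prob-impure} to show pruning also eliminates impure clusters (cost $\delta_{impure}$), the pure-cluster bounds of Section \ref{sec:almost-pure} to place surviving centroids in the $\tila_k$, and the disjointness condition of Assumption \ref{ass:bounds-for-hs} so that the {\sc MinDiam} greedy step visits every $\tila_k$ before revisiting any. Your abstraction of the geometry into a diameter bound $d_{in}$ and gap bound $d_{out}$ with $d_{out}>d_{in}$ is exactly what the explicit $R^{\pm}_\tau$ inequalities in Section \ref{sec:step4} encode, so the two arguments coincide in substance.
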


The idea to prove this result is to show that the following statements
hold with high probability.  By selecting $L$ preliminary
centers in step \ref{step:pick0} of \initalg, each of the Gaussians
obtains at least one center (Section \ref{sec:pick0}). After steps
\ref{step:assign0}, \ref{step:centers1} we obtain ``large'' clusters (mass  $>p_0$) and ``small'' ones (mass $\leq p_0$). A cluster can also be
``pure'' (respectively ``impure'') if most of its mass comes from a single Gaussian
(respectively from several Gaussians). Step \ref{step:prune} removes all ``small'' cluster centers,
but (and this is a crucial step of our argument) w.h.p it will also
remove all ``impure'' cluster centers (Section \ref{sec:impure}). The
remaining clusters are ``pure'' and ``large''; we show (Section
\ref{sec:almost-pure}) that each of their centers is reasonably close
to some Gaussian mean $\mu_k$. Hence, if the Gaussians are well
separated, the selection of final centers $\cenunu_q$ in step
\ref{step:HS} ``cycles through different Gaussians'' before visiting a
particular Gaussian for the second time (Section \ref{sec:step4}). The rest of this section
outlines these steps in more details.

\subsection{Step \ref{step:pick0} of \initalg. Picking the initial centroids $\cenzero$}
\label{sec:pick0}

We need to pick a number of initial centers $L$ large enough that each
Gaussian  has at least 1 center w.h.p. We formalize this here and find
a value for $L$ that ensures the probability of this event is at least
$1-\delta_{miss}$, where $\delta_{miss}$ is a tolerance of our
choice.
Another event that must be avoided for a ``good'' initialization is
that all centroids $\cenzero_j$ belonging to a Gaussian end up with
initial clusters $\clust^0_j$ that have probability less than $p_0$. If this
happens, then after thresholding, the respective Gaussian is left with no
representative centroid, i.e it is ``missed''. We set the tolerance
for this event to $\delta_{thresh}=\delta_{miss}$.
Let $t=2\Phi(-\Delta/2)$ the {\em tail probability} of a cluster and $A_k$
the symmetric neighborhood of $\mu_k$ that has
$\phi_{\mu_k,1}(A_k)=1-t$.
\begin{proposition}\label{lem:pick0} If we choose
\banum \label{eq:Lp0}
L\;\geq\; \left. \left. \left. \left(\ln\frac{1}{\delta_{miss}w_{min}}\right) \right/ \right((1-t)w_{min} \right)
\;\; \text{ and } \;\;
p_0\;=\;\frac{1}{eL}
\eanum
then the probability
over all random samplings of centroids $\cenzero_{1:L}$ that at least
one centroid $c_j^{<0>}$ with assigned mass  $P[\clust^0_j]\geq p_0$ can be found in
each $A_k,\,k=1:K$, is greater or equal to $1-2\delta_{miss}$.
\end{proposition}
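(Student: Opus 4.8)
The plan is to control two independent failure modes and combine them by a union bound, which matches the split of the tolerance $2\delta_{miss}=\delta_{miss}+\delta_{thresh}$ with $\delta_{thresh}=\delta_{miss}$. The first bad event is that some neighborhood $A_k$ receives no initial centroid at all (a Gaussian is ``missed''); the second is that $A_k$ does receive centroids but every one of them is assigned a cluster $\clust^0_j$ of mass below $p_0$, so all of them are pruned in Step~\ref{step:prune}. I would show each event has probability at most $\delta_{miss}$.

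For the missing event, a single sample falls into $A_k$ with probability at least $w_k\,\phi_{\mu_k,1}(A_k)=w_k(1-t)\geq w_{min}(1-t)$, so the probability that none of the $L$ independent draws lands in $A_k$ is at most $(1-w_{min}(1-t))^L\leq e^{-L w_{min}(1-t)}$. Since $\sum_k w_k=1$ and $w_k\geq w_{min}$ force $K\leq 1/w_{min}$, a union bound over the $K$ Gaussians gives failure probability at most $\tfrac{1}{w_{min}}e^{-L w_{min}(1-t)}$. Requiring this to be at most $\delta_{miss}$ is exactly $L w_{min}(1-t)\geq \ln\tfrac{1}{\delta_{miss} w_{min}}$, i.e. the stated choice of $L$; this step is routine.

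The crux is the threshold event, and this is where the factor $1/e$ in $p_0=1/(eL)$ is calibrated. Fix $k$ and let $m_k$ be the number of initial centroids lying in $A_k$ (we work on the complement of the missing event, so $m_k\geq 1$). The $L$ Voronoi cells partition $\R$, and the cells whose centroid lies in $A_k$ collectively carry $\P$-mass at least (essentially) $\P(A_k)\geq w_{min}(1-t)$, so the largest such cell has mass at least $\P(A_k)/m_k$. This is at least $p_0=1/(eL)$ as soon as $m_k\leq e L\,\P(A_k)=e\,\E[m_k]$. Since $m_k\sim\mathrm{Bin}(L,\P(A_k))$, the multiplicative Chernoff bound with ratio $e$ gives $\P(m_k\geq e\,\E[m_k])\leq e^{-\E[m_k]}\leq e^{-L w_{min}(1-t)}\leq \delta_{miss} w_{min}$, and a union bound over the $K\leq 1/w_{min}$ Gaussians again yields at most $\delta_{miss}$. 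Adding the two failure probabilities proves the claim.

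The main obstacle is the word ``essentially'' in the mass-covering step: a cell whose centroid sits near the boundary of $A_k$ may leak mass outside $A_k$, and conversely some boundary mass of $A_k$ may be claimed by a centroid sitting in a neighboring Gaussian. Making the lower bound on the covered mass come out to a controlled fraction of $\P(A_k)$, and balancing that constant against the $e$-slack so that the threshold $p_0$ is actually met rather than missed by a boundary factor, is what forces the separation hypotheses (the lower bound $\Delta$ on the gap and the regularity of $f$ on $A_k$). This geometric bookkeeping, not the probabilistic estimates, is where the real care is needed.
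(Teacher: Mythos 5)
The paper gives no proof of Proposition \ref{lem:pick0} to compare against (it is declared ``complicated but standard fare'' and omitted), so your proposal must stand on its own. Its probabilistic skeleton is sound and very likely the intended one: the split into a ``miss'' event and a ``threshold'' event, each at tolerance $\delta_{miss}$; the bound $K\leq 1/w_{min}$; the estimate $(1-w_{min}(1-t))^L\leq e^{-Lw_{min}(1-t)}\leq \delta_{miss}w_{min}$; and, nicely, the observation that $p_0=1/(eL)$ is calibrated exactly against the multiplicative Chernoff bound $\Pr[m_k\geq e\,\E m_k]\leq e^{-\E m_k}$. All of that is correct.

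The gap is the step you dismiss as ``geometric bookkeeping'', and your diagnosis of it is wrong: it is not repaired by the separation hypotheses. Your argument needs the union $U_k$ of Voronoi cells of centroids lying in $A_k$ to carry mass essentially $P[A_k]$. Note that $t=2\Phi(-\Delta/2)$ forces $A_k=[\mu_k-\Delta/2,\mu_k+\Delta/2]$. Now suppose every centroid in $A_k$ sits at (or near) the left endpoint $\mu_k-\Delta/2$ while another centroid sits at $\mu_k+\Delta/2+\epsilon$. Then all of $(\mu_k+\epsilon/2,\,\mu_k+\Delta/2]$ is claimed by the outside centroid, and as $\epsilon\to 0$ the leaked fraction of the Gaussian-$k$ mass of $A_k$ equals $\bigl(\Phi(\Delta/2)-\Phi(0)\bigr)/\bigl(2\Phi(\Delta/2)-1\bigr)=\tfrac{1}{2}$ \emph{exactly, for every} $\Delta$; larger separation makes this configuration no less damaging, only less probable --- and your proof never spends any probability to exclude it. With $P[U_k]\geq P[A_k]/2$ in place of $P[A_k]$, you need $m_k\leq \tfrac{e}{2}\E[m_k]$ rather than $m_k\leq e\,\E[m_k]$, and the Chernoff exponent $\gamma\ln\gamma-\gamma+1$ collapses from $1$ at $\gamma=e$ to roughly $0.058$ at $\gamma=e/2$, so the stated $L$ is too small by a factor of about $17$ and the union bound no longer closes; the crowding-at-one-endpoint picture shows the half-loss can co-occur with any value of $m_k$, so no case split on $m_k$ saves it. A correct completion needs an extra idea, for example: require in the miss event a centroid in the core interval $B_k=[\mu_k-\Delta/6,\,\mu_k+\Delta/6]$, for which one checks deterministically that every point of $B_k$ is claimed by a centroid within $\Delta/2$ of $\mu_k$, hence $B_k\subseteq U_k$ with no leakage. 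But this replaces $1-t$ by $2\Phi(\Delta/6)-1$ in both events and thus perturbs the constants $L$ and $p_0$ of (\ref{eq:Lp0}). As written, your proof does not establish the proposition with its stated constants.
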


The proof of this result is complicated but standard fare (e.g. Chernoff
bounds) and is therefore omitted. %

After steps \ref{step:pick0}, \ref{step:assign0} and
\ref{step:centers1} of \initalg~
are performed, we obtain centers $\cenunu_{1:L}$
situated at the centers of mass of their respective clusters
$\clust^1_{1:L}$. Removing the centers of small clusters follows. We
now describe a  beneficial effect of this step.

\subsection{Step \ref{step:prune} of \initalg. Thresholding removes impure clusters}
\label{sec:impure}

We introduce the concept of {\em purity} of a cluster, which is
related to the ratio of points from a certain Gaussian w.r.t to the
total probability mass of the cluster. Denote $P_k$ the probability
distribution induced by the $k$-th Gaussian $\phi_{\mu_k,1}$.
\begin{definition} A cluster $\clust$ is {\em $(1-\tau)$-pure} if most of its
points come from a single Gaussian, i.e if $w_kP_k[\clust]\geq
(1-\tau)P[\clust]$, with $\tau<1/2$ being a  positive constant. A
cluster which is not $(1-\tau)$-pure is {\em $\tau$-impure} (or simply
{\em impure}).
\end{definition}
The values of $\tau$ that we consider useful are of the order
$0.001-0.02$ and, as it will appear shortly, $\tau<w_{min}/2$.
\commentout{
\begin{definition} The {\em local purity} of Gaussian $k$ relative to
  Gaussian $k'$ at point $x$ is
\ba
\gamma_{k,k'}\;=\;\frac{w_{k'}f_{k'}(x)}{w_{k}f_{k}(x)}
\ea
and {\em local purity} of Gaussian $k$ at $x$ is
\ba
\gamma_k(x)\;=\;\sum_{k'\neq  k}\gamma_{k,k'}
\;=\;\frac{1-w_kf_k(x)}{w_{k}f_{k}(x)}
\ea
\end{definition}
Note that the definition is slightly counterintuitive: a Gaussian
becomes more pure, $\gamma_{k}$ decreases.
It is easy to see that the local purities $\gamma_{k,k'}$ are convex and  log-linear functions of $x$. For instance,
\ba
\gamma_{21}(x)\;=\;\frac{w_1}{w_2}e^{-(x-\mu_1)^2/2+(x2-\mu_2)^2/2}
\;=\;e^{x(\mu_1-\mu_2)+\frac{\mu_2^2-\mu_1^2}{2}+\ln \frac{w_1}{w_2}}
\ea
}%
The purity of a cluster helps in the following way: if a cluster
is pure, then it can be ``tied'' to one of the Gaussians. Moreover,
its properties (like center of mass) will be dictated by the Gaussian
to which it is tied, with the other Gaussians' influence being
limited; Section \ref{sec:almost-pure} exploits this idea.

But there will also be clusters that are impure, and so they cannot be
tied to any Gaussian. Their properties will be harder to analyze, and
one expects their behavior to be less predictable. Luckily, impure
clusters are very likely small. As we show now, the chance of having
an impure cluster with mass larger than $p_0$ is bounded by a
$\delta_{impure}$ which we are willing to tolerate.

Because of limited space, we leave out the long and complex rigourous
proofs of this result, and give here just the main ideas. %
Let $\clust_j=[z_1,z_2]$ be a $\tau$-impure cluster, with
$P[\clust_j]\geq p_0$, $c_j$ the centroid that generates $\clust_j$
(not necessarily at its center of mass) and $c_{j-1},c_{j+1}$ the
centroids of the adjacent clusters (not necessarily centers of
mass). As one can show,
even though an impure cluster contains some
probability mass from each Gaussian, in most of this section we only
need consider the two Gaussians which are direct neighbors of
$\clust$. Let us denote the parameters of these (consecutive)
Gaussians by $\mu_{1,2}, w_{1,2}$.

For the purpose of the proof, we are looking here at the situation after step
\ref{step:assign0}, thus the centroids $c_{j-1,j,j+1}$ should  be
$\cenzero_{j-1,j,j+1}$, but we renounce this convention temporarily to
keep the notation light. We want to bound the probability of cluster
$\clust_j$ being impure and large. Note that Step \ref{step:centers1}
of the \initalg~ does not affect either of these properties, as it
only acts on the centers.

\begin{SCfigure}[40][t]
\centering
\includegraphics[width=0.5\textwidth]{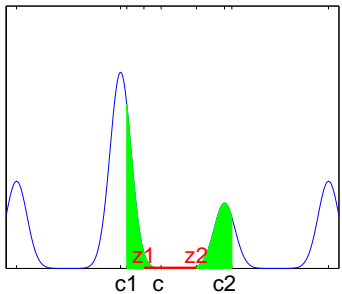}
\caption{\small \label{fig:impure}
Concrete example of a large impure cluster $[z1,z2]$;
$c1,c,c2$ represent the consecutive cluster centers
$\cenzero_{j-1},\cenzero_j,\cenzero_{j+1}$. We demonstrate that if
$P[z1,z2]>p_0$ then the interval $[c1,c2]$ (which is twice its length)
must have mass $>p_1>>p_0$. If $L$ is large enough, having such a
large interval contain a single $c_j$ is improbable. Numerical values:
mixture with $\Delta=10,w_{min}=0.15$, impurity
$\tau([z_1,z_2])=0.07$, $P[z1,z2]=0.097$, $P[c1,c2]=0.24$; using
$\delta_{miss}=0.02,\tau=0.015$ one gets
$L=38,\,p_0=0.095<P[z1,z2],\,p_1=0.0105<P[c1,c2],\,\delta_{impure}=0.016>>(1-P[c1,c2])^{L-1}=0.00003$}
\end{SCfigure}

A simple observation is the following. Since
$z_1=\frac{c_{j-1}+c_j}{2}$ and $z_2=\frac{c_{j+1}+c_j}{2}$ we have $
c_{j+1}-c_{j-1}\;=\;2(z_2-z_1)\;=\;2\Delta z$. The idea is to show
that if an impure region has probability larger than $p_0$, then the
interval $[c_{j-1},c_{j+1}]$ has probability at least $p_1$,
significantly larger than $p_0$. On the other hand, the probability of
sampling from $P$ a single center $\clust_j$ out of a total of $L$ in
an interval of length $2\Delta z$ is
$P[c_{j-1},c_{j+1}](1-P[c_{j-1},c_{j+1}])^{L-1}<(1-p_1)^{L-1}$.  If
$p_1$ and $L$ are large enough, then
$(1-p_1)^{L-1}\stackrel{def}{=}\delta_{impure}$ will be vanishingly
small. We proceed in two steps: first we find the minimum length
$\Delta z_0$ of a cluster $\clust_j$ which is impure and large. Then,
we find a lower bound $p_1$ on the probabability of any interval
$[c,\,c+2\Delta z_0]$ under the mixture distribution.  The following
assumption ensures that the purity $1-\tau$ is attainable in each
Gaussian.
\begin{assumption}\label{ass:local-purity} Let
$\gamma_{k,k'}(x)=\frac{w_{k'}\phi_{\mu_{k'},1}(x)}{w_{k}\phi_{\mu_{k},1}(x)}$
(a local purity measure). Then
\ba
\sum_{k'\neq k}\gamma_{k,k'}\left(\finv\left(
\frac{1}{2}+\frac{(1-\tau)p_0}{2w_{min}}\right)\right)
\;\leq\;\frac{\tau}{1-\tau}.
\ea
\end{assumption}

\commentout{
\begin{proposition}\label{lem:ass-local-purity} If assumption
\ref{ass:local-purity} holds, then near each Gaussian mean $\mu_k$ there is a region $\clust$ of mass $P[\clust]\geq p_0$ which has purity at least $1-\tau$.
\end{proposition}
}
The next assumption ensures that $\Delta z_0>0$, i.e it is an informative bound.
\begin{assumption} \label{ass:2dz<dmu}
$d\left(\frac{\tau p_0}{w_{min}}\right)<\frac{1}{2}\Delta$.
\end{assumption}

\begin{proposition}[Impure clusters are small
w.h.p]\label{thm:prob-impure} Let $w_1,w_2$ be the mixture weights of two
consecutive Gaussians and define $
\Delta z_0\;=\;\Delta-d\left(\frac{\tau p_0}{w_1}\right)
-d\left(\frac{\tau p_0}{w_2}\right)$,
\[ p_1 =
w_1\Phi\left(\frac{\Delta-2\Delta z_0}{2}-\frac{\ln \frac{w_1}{w_2}}{\Delta-2\Delta z_0}\right)
+w_2\Phi\left(\frac{\Delta-2\Delta z_0}{2}-\frac{\ln
\frac{w_2}{w_1}}{\Delta-2\Delta z_0}\right)
\]
and $\delta_{impure}=(1-p_1)^{L-1}$. Let $\clust^0_j,j=1,\hdots,L$ be the
regions associated with $\cenzero_{1:L}$ after step \ref{step:assign0}
of the \initalg~ algorithm. If assumptions
\ref{ass:wmin},\ref{ass:local-purity},\ref{ass:2dz<dmu} hold, then the
probability that there exists $j \in \{1,\hdots,L\}$ so that $P[\clust_j^0]\geq p_0$ and
$w_1P_1[\clust_j^0]\geq \tau P[\clust_j^0],\; w_2P_2[\clust_j^0]\geq \tau
P[\clust_j^0]$ is at most $\delta_{impure}$.  This probability is over
the random initialization of the centroids $\cenzero_{1:L}$.
\end{proposition}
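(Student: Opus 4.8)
The plan is to reduce the probabilistic claim to two deterministic geometric facts about the sampled centroids plus a single sampling estimate. Write $\clust_j^0=[z_1,z_2]$ for the cell generated by $\cenzero_j$; in one dimension its endpoints are the midpoints $z_1=(\cenzero_{j-1}+\cenzero_j)/2$ and $z_2=(\cenzero_j+\cenzero_{j+1})/2$, so the enclosing interval $K_j:=[\cenzero_{j-1},\cenzero_{j+1}]$ has length $2(z_2-z_1)$, satisfies $K_j\supseteq[z_1,z_2]$, and contains exactly one of the $L$ sampled centroids (namely $\cenzero_j$) in its interior. I would prove (i) that a large impure cell forces $K_j$ to be long, (ii) that a long $K_j$ has large mass $p_1$, and (iii) that an almost-empty interval of mass $p_1$ is unlikely.

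For step (i) I turn the two impurity inequalities into tail constraints. From $w_1P_1[\clust_j^0]\ge\tau P[\clust_j^0]\ge\tau p_0$ and $P_1[z_1,z_2]\le 1-\Phi(z_1-\mu_1)$ one gets $z_1\le\mu_1+\finv(1-\tau p_0/w_1)$; identifying $d(\cdot)=\finv(1-\cdot)$, this reads $z_1\le\mu_1+d(\tau p_0/w_1)$. Symmetrically $w_2P_2[\clust_j^0]\ge\tau p_0$ forces $z_2\ge\mu_2-d(\tau p_0/w_2)$. Subtracting gives $z_2-z_1\ge\Delta-d(\tau p_0/w_1)-d(\tau p_0/w_2)=\Delta z_0$, which is positive by Assumption~\ref{ass:2dz<dmu}; hence $K_j$ has length at least $2\Delta z_0$ and, since $\mu_1+d(\tau p_0/w_1)\le\mu_2-d(\tau p_0/w_2)$, it covers the fixed interval $[\mu_1+d(\tau p_0/w_1),\,\mu_2-d(\tau p_0/w_2)]$.

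For step (ii) I bound $P[K_j]\ge w_1P_1[K_j]+w_2P_2[K_j]$ by the two neighbouring components alone (the other Gaussians only add mass) and minimise this two-component mass over all windows of length $2\Delta z_0$ that cover the interval found in step (i). A short variational argument shows the minimising window places its two endpoints at equal mixture density, i.e.\ symmetrically about the valley between the two Gaussians, and evaluating the mass there yields the closed form $p_1$; the offset $\ln(w_1/w_2)/(\Delta-2\Delta z_0)$ is exactly the point where $w_1\phi_{\mu_1,1}$ and $w_2\phi_{\mu_2,1}$ balance, computed with the effective separation $\Delta-2\Delta z_0$ in place of $\Delta$. I expect this optimisation---together with the case analysis of whether the covering constraint is active and the verification that the stationary window is really the minimiser---to be the main obstacle, and the source of the ``long and complex'' computation the paper defers.

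Step (iii) is then short. The event that some large impure cell exists is contained in the event that some sampled centroid is the unique one of the $L$ inside an interval $K_j$ with $P[K_j]\ge p_1$. Fixing that centroid, each of the remaining $L-1$ centroids must avoid $K_j$, an event of probability at most $1-p_1$, so by independence the probability is at most $(1-p_1)^{L-1}=\delta_{impure}$. The one delicate point is that $K_j$ is itself random, its endpoints being sampled; to turn the union over $j$ into the clean exponential bound I would pass to a maximal-spacing statement for the order statistics of $\cenzero_{1:L}$, observing that the two adjacent gaps flanking $\cenzero_j$ together enclose mass at least $p_1$.
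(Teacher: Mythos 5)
Your skeleton is the same as the paper's (the paper itself only sketches this proof and defers the ``long and complex'' rigorous version): a minimum length $\Delta z_0$ forced by impurity, a mass lower bound $p_1$ for the doubled interval $[c_{j-1},c_{j+1}]$, then the sampling estimate $(1-p_1)^{L-1}$. Your step (i) is correct and reproduces the paper's $\Delta z_0$ exactly, and the conditioning subtlety you flag in step (iii) is real but shared with the paper's own sketch (both, strictly speaking, also owe a union-bound factor in $L$). The genuine gap is step (ii). The set you minimize over --- windows of length $2\Delta z_0$ covering $I^*=[\mu_1+d(\tau p_0/w_1),\,\mu_2-d(\tau p_0/w_2)]$ --- is a strict relaxation of the true constraint set, because it retains the hypothesis $P[\clust^0_j]\geq p_0$ only through the weak corollaries $w_iP_i[\clust^0_j]\geq\tau p_0$. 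Over your relaxed set the minimum is indeed (in the regime $2\Delta z_0<\Delta$) the equal-density window, but its value evaluates, up to negligible far-tail terms, to
\[
w_1\Phi\left(-\frac{\Delta-2\Delta z_0}{2}-\frac{\ln (w_1/w_2)}{\Delta-2\Delta z_0}\right)
+w_2\Phi\left(-\frac{\Delta-2\Delta z_0}{2}-\frac{\ln (w_2/w_1)}{\Delta-2\Delta z_0}\right),
\]
i.e.\ the stated $p_1$ with the sign of the leading argument flipped, which is far smaller than $p_1$. Concretely, take $\Delta=10$, $w_1=w_2=0.15$, $\tau=0.015$, $p_0\approx 0.0095$: then $d(\tau p_0/w_i)\approx 3.1$, $\Delta z_0\approx 3.8$, and the window of length $2\Delta z_0$ centered at the valley covers $I^*$ but has mixture mass $\approx 0.034$, whereas the proposition's $p_1=(w_1+w_2)\Phi\bigl((\Delta-2\Delta z_0)/2\bigr)\approx 0.27$. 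So executing your plan faithfully yields a bound of order $0.034$ and hence only the much weaker conclusion $(1-0.034)^{L-1}$, not the statement.

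The reason your relaxation is lossy is exactly the constraint you dropped: the valley interval $I^*$ carries almost no mass (in the example $P[I^*]\approx 3\cdot 10^{-4}\ll p_0$), so your worst-case window corresponds to a cell that could never satisfy $P[\clust^0_j]\geq p_0$. Re-introducing that constraint forces one of the two components to contribute nearly $p_0$ (not merely $\tau p_0$), so the cell must reach into the bulk of one Gaussian, and the doubled interval $[c_{j-1},c_{j+1}]$ then swallows a constant fraction of that Gaussian's weight (about $0.15$ in the example); this is where the omitted computation must live, and it is the idea your proposal is missing. Two further cautions. First, your claim that the minimizer is the equal-density window needs a second-derivative check: that placement is a minimum only when $2\Delta z_0<\Delta$, and a maximum (with the true minimum at the covering boundary) when $2\Delta z_0>\Delta$. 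Second, the printed formula for $p_1$ is itself suspect --- it is inconsistent with the numerical values in the paper's Figure~\ref{fig:impure} and, in the example above, exceeds even the correct constrained minimum $\approx 0.15$ --- so matching it exactly may not be possible; but this does not rescue your proposal, since your route provably lands below even the correct bound.
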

To apply this proposition without knowing the values of $w_1,w_2$ one
needs to minimize the bound $p_1$ over the range $w_1,w_2>w_{min},
\,w_2+w_1\leq 1-(K-2)w_{min}$. This
minimum can be obtained numerically if the other quantities are known.

We also stress that because of the two-step approach, first minimizing
$\Delta z_0$, then $P[c,c+2\Delta z_0]$, the bound $\delta_{impure}$
obtained is not tight and could be significantly improved.

\subsection{The $(1-\tau)$-pure cluster}
\label{sec:almost-pure}

Now we focus on the clusters that have $P[\clust]>p_0$ and are
$(1-\tau)$-pure. By Proposition \ref{thm:prob-impure}, w.h.p their
centroids are the only ones which survive the thresholding in step
\ref{step:prune} of the \initalg~ algorithm. In this section we will
find bounds on the distance $|\cenunu_j-\mu_k|$ between $\clust_j$'s
center of mass and the mean of ``its'' Gaussian.

We start by listing some useful properties of the standard
Gaussian. Denote by $r(x)$ the center of mass of $[x,\infty)$ under
the truncated standard Gaussian, and by $d(t)$ the solution of
$1-\Phi(d)=t$, with $0<t<1$.
Intuitively, $d(t)$ is the cutoff
location for a tail probability of $t$. Note that any interval whose
probability under the standard normal exceeds $t$ must intersect
$[-d(t),d(t)]$. Let $a>0$ (in the following $a$ as to be thought as a small positive constant).
\begin{proposition}
(i) $r(x)$ is convex, positive and increasing for $x\geq 0$
(ii) For $w\in[2a,\infty)$ the function $d(a/w)$ is convex, positive and increasing
w.r.t $w$, and $r(d(a/w))$ is also
convex, positive and increasing.
\end{proposition}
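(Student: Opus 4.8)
The plan is to analyze $r(x)$ and $d(t)$ directly via their analytic definitions, reducing everything to monotonicity and convexity of explicit one-variable functions built from $\phi$ and $\Phi$. Recall that the center of mass of the truncated standard Gaussian on $[x,\infty)$ is
\[
r(x) = \frac{\int_x^\infty u\,\phi(u)\,du}{\int_x^\infty \phi(u)\,du} = \frac{\phi(x)}{1-\Phi(x)},
\]
using $\int_x^\infty u\,\phi(u)\,du = \phi(x)$. This closed form is the key object; the quantity $r$ is exactly the hazard (inverse Mills ratio) of the standard normal, whose analytic properties are classical. So for part (i) I would first establish positivity (immediate, since both numerator and denominator are positive), then compute $r'(x) = r(x)\bigl(r(x)-x\bigr)$ by differentiating the quotient and using $\phi'(x) = -x\phi(x)$. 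Since the conditional mean of a distribution supported on $[x,\infty)$ must exceed $x$, we have $r(x) > x$, hence $r'(x) > 0$ and $r$ is increasing on all of $\R$ (in particular for $x \geq 0$). For convexity I would differentiate once more: $r''(x) = r'(x)\bigl(2r(x)-x\bigr) - r(x)$, and then argue $r''(x) \geq 0$; the cleanest route is to recall the standard fact that the Gaussian hazard rate is convex, or to bound $r(x)-x$ from below and show the bracketed expression stays nonnegative for $x\geq 0$.

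For part (ii), the idea is to treat $d(a/w)$ and $r(d(a/w))$ as compositions and chain together monotonicity/convexity facts. By definition $d(t)$ solves $1-\Phi(d)=t$, i.e.\ $d = \Phi^{-1}(1-t)$, so $d$ is a strictly decreasing function of $t$. As $w$ increases on $[2a,\infty)$, the argument $t = a/w$ decreases, so $d(a/w)$ increases; positivity holds because $w\geq 2a$ forces $t = a/w \leq 1/2$, hence $d(a/w) = \Phi^{-1}(1-t) \geq \Phi^{-1}(1/2) = 0$. Convexity of $w \mapsto d(a/w)$ I would get by differentiating twice. Writing $g(w) = d(a/w)$ and using $d'(t) = -1/\phi(d(t))$ (from implicit differentiation of $1-\Phi(d)=t$), the first derivative is $g'(w) = d'(a/w)\cdot(-a/w^2) = \frac{a}{w^2\,\phi(d(a/w))}$, manifestly positive; the second derivative then reduces to showing a specific combination of $w$, $d$, and $\phi(d)$ is nonnegative, which on the region $d\geq 0$ follows because $\phi(d)$ is decreasing in $d$ and $d$ is increasing in $w$. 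Finally, for $r(d(a/w))$ I would invoke the composition principle: $r$ is convex and increasing (from part (i)), and $d(a/w)$ is convex and increasing in $w$ with range in $[0,\infty)$, so the composition of a convex increasing function with a convex increasing function is convex and increasing. Positivity follows since $d(a/w)\geq 0$ and $r$ is positive there.

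The main obstacle I expect is the convexity claims, not the monotonicity or positivity ones, which are essentially immediate from the sign of first derivatives. Convexity of the Gaussian hazard rate $r(x) = \phi(x)/(1-\Phi(x))$ requires a genuine estimate rather than a one-line sign check: after computing $r''$ one is left verifying an inequality among $r(x)$, $x$, and their combinations, and the naive bound $r(x) > x$ alone is not quite enough to conclude $r'' \geq 0$ without a sharper lower bound on $r(x)-x$. Similarly, convexity of $w\mapsto d(a/w)$ hinges on controlling the second derivative of $\Phi^{-1}$ composed with $a/w$, which mixes the convexity of $1/w$ with the curvature of the inverse cdf. I would handle both by recording the relevant closed-form derivatives and then appealing to the well-known log-concavity of the Gaussian density (equivalently, that $1-\Phi$ is log-concave), from which convexity of the hazard rate is standard; this lets me avoid a long explicit estimation. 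Once the single-variable convexity facts are in place, assembling them through the composition rule for part (ii) is routine.
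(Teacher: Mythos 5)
First, note that the paper gives no proof to compare against: it declares the proofs of this proposition ``straightforward'' and omits them, so your attempt can only be judged against the statement itself. Your part (i) is essentially sound: $r(x)=\phi(x)/(1-\Phi(x))$, positivity is immediate, and $r'(x)=r(x)(r(x)-x)>0$ because the conditional mean exceeds the truncation point. However, your fallback justification for convexity is a non sequitur: log-concavity of the Gaussian density (equivalently of $1-\Phi$) implies only that the hazard rate is \emph{increasing}, not that it is convex. The standard logistic distribution is a counterexample to the implication you invoke: its density is log-concave, yet its hazard rate equals its cdf, which is bounded and hence not convex. Convexity of the Gaussian hazard rate is a strictly finer classical fact (Sampford's inequality; in your notation $(r(x)-x)(2r(x)-x)\geq 1$, equivalently $(r(x)-x)^{2}\geq 1-r'(x)$), and you must either cite it as such or prove that estimate; log-concavity alone cannot deliver it.

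Part (ii) contains the fatal gap, and it sits exactly in the step you waved through. With $g(w)=d(a/w)$ and $d'(t)=-1/\phi(d(t))$, your formula $g'(w)=a/(w^{2}\phi(g(w)))>0$ is correct, but differentiating once more gives
\[
g''(w)\;=\;\frac{a}{w^{3}\phi(g(w))}\left(\frac{a\,g(w)}{w\,\phi(g(w))}-2\right),
\]
so $g''(w)\geq 0$ would require $t\,g\geq 2\phi(g)$ with $t=a/w=1-\Phi(g)$. The Mills-ratio inequality $x(1-\Phi(x))<\phi(x)$ for $x>0$ shows that in fact $g\,(1-\Phi(g))<\phi(g)<2\phi(g)$ for every $g\geq 0$; hence $g''<0$ on all of $[2a,\infty)$ and $d(a/w)$ is strictly \emph{concave} in $w$ (sanity check: $d(a/w)\approx\sqrt{2\ln(w/a)}$ for large $w$). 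Your assertion that the sign condition ``follows because $\phi(d)$ is decreasing and $d$ is increasing'' is therefore false, and the composition step for $r(d(a/w))$ collapses with it: there is no convexity rule for a convex increasing function composed with a concave one, and numerically $r(d(a/w))$ is concave in $w$ as well. What your attempt actually uncovers is that the proposition's convexity claims, read literally as convexity with respect to $w$, do not hold; what is true --- and all that the paper's subsequent arguments genuinely use --- is positivity, monotonicity in $w$ (which you prove correctly), and convexity with respect to the tail mass $t$, i.e. $d''(t)=d(t)/\phi(d(t))^{2}\geq 0$ and $\frac{d^{2}}{dt^{2}}\,r(d(t))=\bigl(r''(d)+d\,r'(d)\bigr)/\phi(d)^{2}\geq 0$ for $t\leq 1/2$ --- equivalently, convexity as functions of $1/w$, not of $w$.
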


\begin{proposition}\label{lem:distance-to-center}
Let $\clust=[z_1,z_2]$ be an interval (with $z_1,z_2$ possibly
infinite), $c$ its center of mass under the  normal
distribution $\phi_{\mu,1}$ and $P[\clust]$ its probability under the same
distribution. If $1/2\geq P[\clust]\geq p$, then $|c-\mu| \;\leq\;
r(d(p))$ and $\min \{ |z_1-\mu|,\,|z_2-\mu| \} \;\leq\; d(p)\;=\; -\finv(p)$.

\end{proposition}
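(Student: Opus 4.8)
The plan is to first normalize: since $r$ and $d$ are defined through the standard Gaussian $\phi=\phi_{0,1}$ and the entire statement is invariant under the shift $x\mapsto x-\mu$, I would assume $\mu=0$, so that $\clust=[z_1,z_2]$, $P=\Phi(z_2)-\Phi(z_1)$, and the center of mass is $c=P^{-1}\int_{z_1}^{z_2}t\,\phi(t)\,dt$. Set $q=P\in[p,1/2]$. Using $\int t\,\phi(t)\,dt=-\phi(t)$, I record the two facts $\int_{z_1}^{z_2}t\,\phi=\phi(z_1)-\phi(z_2)$ and $r(x)=\phi(x)/(1-\Phi(x))$, which will be used repeatedly.

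The main step for the bound on $|c|$ is an extremal characterisation: \emph{among all intervals of a fixed mass $q$, the one-sided tail maximises the distance of the center of mass to the origin}. To prove this I would fix $q$ and parametrise the admissible intervals by their left endpoint $z_1\in(-\infty,d(q)]$, letting $z_2=z_2(z_1)$ be determined by $\Phi(z_2)-\Phi(z_1)=q$; differentiating this constraint gives $z_2'=\phi(z_1)/\phi(z_2)$. Writing $c(z_1)=q^{-1}(\phi(z_1)-\phi(z_2))$ and using $\phi'(t)=-t\phi(t)$,
\[
q\,c'(z_1)=\frac{d}{dz_1}\bigl(\phi(z_1)-\phi(z_2)\bigr)
=-z_1\phi(z_1)+z_2\phi(z_2)\tfrac{\phi(z_1)}{\phi(z_2)}
=\phi(z_1)\,(z_2-z_1)\;>\;0,
\]
so $c$ is strictly increasing in $z_1$. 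Its extreme values are therefore attained at the ends of the parameter range: the right tail $[d(q),\infty)$ gives $c=\phi(d(q))/q=r(d(q))$, and the left tail $(-\infty,-d(q)]$ gives $c=-r(d(q))$; hence $|c|\le r(d(q))$. To finish the first claim I would use $p\le q\le1/2$ together with the preceding proposition: $d$ is decreasing in its argument (immediate from $d=-\finv$) with $d(1/2)=0$, so $0\le d(q)\le d(p)$, and $r$ is increasing on $[0,\infty)$, whence $|c|\le r(d(q))\le r(d(p))$.

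For the endpoint bound I would argue by contradiction, assuming $|z_1|>d(p)$ and $|z_2|>d(p)$, i.e. both endpoints lie outside $[-d(p),d(p)]$, and recall $\Phi(d(p))=1-p$, $\Phi(-d(p))=p$. If both $z_1,z_2>d(p)$ then $\Phi(z_1)>1-p$, so $P=\Phi(z_2)-\Phi(z_1)<1-(1-p)=p$, contradicting $P\ge p$; the mirror case $z_1,z_2<-d(p)$ is identical. The only remaining possibility $z_1<-d(p)<d(p)<z_2$ forces $P>(1-p)-p=1-2p$, which is excluded by the upper bound $P\le1/2$ (effective in the intended regime, where $p=p_0$ is small so that $1-2p>1/2$). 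Thus at least one endpoint lies in $[-d(p),d(p)]$, i.e. $\min\{|z_1|,|z_2|\}\le d(p)=-\finv(p)$.

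I expect the extremal center-of-mass computation to be the crux: once the sign $c'(z_1)>0$ is established, the rest reduces to the monotonicity of $r$ and $d$ already proved. The one delicate point is the two-sided case of the endpoint bound, where the hypothesis $P\le1/2$ is genuinely needed (and is only fully decisive once $p<1/4$, which holds in every application since $p=p_0=1/(eL)$).
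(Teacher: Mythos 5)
Your proof is correct, and it supplies something the paper itself does not: the authors write only ``The proofs are straightforward and omitted,'' so there is no argument in the paper to compare against. Your variational step is the right crux. Parametrizing the intervals of fixed mass $q$ by their left endpoint and computing $q\,c'(z_1)=\phi(z_1)(z_2-z_1)>0$ cleanly identifies the two half-infinite tails as the extremal intervals, giving $|c|\le r(d(q))$; the reduction from $r(d(q))$ to $r(d(p))$ via the monotonicity of $r$ and $d$ is exactly where the hypothesis $P[\clust]\le 1/2$ enters (it guarantees $d(q)\ge 0$, which is the range on which the paper's preceding proposition asserts that $r$ is increasing). This is a complete and rigorous argument for the center-of-mass bound.

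One point you flagged deserves emphasis, because it is a genuine correction to the statement rather than a cosmetic remark: the endpoint claim, as literally stated, is false for $p\in(1/4,1/2)$. For instance with $\mu=0$ and $p=0.4$ (so $d(p)\approx 0.253$), the interval $[-0.26,\,1.025]$ has mass $\approx 0.45\in[p,1/2]$, yet $\min\{|z_1|,|z_2|\}=0.26>d(p)$. Your case analysis shows why: when $z_1<-d(p)<d(p)<z_2$ one only gets $P>1-2p$, which contradicts $P\le 1/2$ precisely when $p\le 1/4$. So the proposition implicitly requires $p\le 1/4$; this is harmless for the paper, which invokes it only with $p$ of order $p_0=1/(eL)$ (or $(1-\tau)p_0/w_k$), far below $1/4$ under Proposition \ref{lem:pick0}, but a careful statement should record the extra hypothesis.
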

The proofs are straightforward and omitted.
Define now $w_{max} =
1-(K-1)w_{min}$ the maximum possible cluster size in the mixture and
\[
R(w)= r\left[-\finv\left(\frac{(1-\tau)p_0}{w}\right)\right]
\; , \;\;
\tilr(w_1,w_2)= - \finv\!\!\!\left[\frac{\tau w_1}{(1-\tau)w_2}
+\Phi(d(\frac{(1-\tau)p_0}{w_1}\!-\!\Delta)\right]
\]

In the next proposition, we will want to assume that $\tilr\geq
0$. The following assumption is sufficient for this purpose.
\begin{assumption}\label{ass:tau}
$\frac{\tau}{w_{min}}\;\leq\;\frac{1}{2}-\Phi(-\Delta/2)$
\end{assumption}

\begin{proposition}[The $(1-\tau)$-pure cluster]\label{lem:almost-pure} Let
  cluster $\clust=[z_1,z_2]$ with  $z_2>\mu_k$, $P[\clust]\geq p_0$ and
$w_kP_k[\clust]\;\geq\;(1-\tau)P[\clust]$ for some $k$,
 with $\tau$ satisfying Assumptions \ref{ass:local-purity} and
\ref{ass:tau}. Let $c,c_{k}$ denote the
center of mass of $\clust$ under $P,P_{k}$ respectively. Then
\banum \label{eq:bound-mass-center-almost-pure-c1}
|c_k-\mu_k|\;\leq\;R(w_k)
\eanum
and, whenever $k<K$
\banum\label{eq:bound-mass-center-almost-pure-0}
z_2-\mu_k&\leq-\tilr(w_k,w_{k+1})
\,\leq\, -\tilr(w_{max},w_{min})
\eanum
\end{proposition}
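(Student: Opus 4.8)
The plan is to reduce both bounds to Proposition~\ref{lem:distance-to-center} together with the definition of $(1-\tau)$-purity. The purity hypothesis $w_kP_k[\clust]\geq(1-\tau)P[\clust]$ and the mass bound $P[\clust]\geq p_0$ combine into the single scalar inequality $P_k[\clust]\geq\frac{(1-\tau)P[\clust]}{w_k}\geq\frac{(1-\tau)p_0}{w_k}$, which is the lower bound on the mass of $\clust$ under the \emph{single} Gaussian $\phi_{\mu_k,1}$ feeding both parts. Everything else is bookkeeping with the monotone functions $r$, $d$ and $\finv$.

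For \eqref{eq:bound-mass-center-almost-pure-c1} I would apply Proposition~\ref{lem:distance-to-center} to the interval $\clust$ under $P_k=\phi_{\mu_k,1}$ with $p=\frac{(1-\tau)p_0}{w_k}$, obtaining
\[
|c_k-\mu_k|\;\leq\;r(d(p))\;=\;r\!\left(-\finv\!\left(\tfrac{(1-\tau)p_0}{w_k}\right)\right)\;=\;R(w_k),
\]
which is exactly the claim. The only thing to verify is that the lower bound on mass suffices: the extremal configuration for fixed mass is a one-sided tail, so $r(d(\cdot))$ is decreasing in the mass, and when $P_k[\clust]>1/2$ the interval is so central that $c_k$ is even closer to $\mu_k$.

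The second bound is where the work lies; it really controls how far the right endpoint $z_2$ reaches towards the next Gaussian, i.e.\ $z_2-\mu_{k+1}$. First I would convert purity into a leakage bound on the neighbour: since $\sum_{j\neq k}w_jP_j[\clust]=P[\clust]-w_kP_k[\clust]\leq\tau P[\clust]$, in particular $w_{k+1}P_{k+1}[\clust]\leq\tau P[\clust]\leq\frac{\tau}{1-\tau}w_kP_k[\clust]$, whence $P_{k+1}[\clust]\leq\frac{\tau w_k}{(1-\tau)w_{k+1}}$. Writing $P_{k+1}[\clust]=\Phi(z_2-\mu_{k+1})-\Phi(z_1-\mu_{k+1})$ and rearranging gives $\Phi(z_2-\mu_{k+1})\leq\frac{\tau w_k}{(1-\tau)w_{k+1}}+\Phi(z_1-\mu_{k+1})$. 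To control the leftover term I would use the endpoint half of Proposition~\ref{lem:distance-to-center}, $\min\{|z_1-\mu_k|,|z_2-\mu_k|\}\leq d\!\left(\tfrac{(1-\tau)p_0}{w_k}\right)$; since $z_2>\mu_k$ by hypothesis, a short case analysis forces $z_1-\mu_k\leq d\!\left(\tfrac{(1-\tau)p_0}{w_k}\right)$, and $\mu_{k+1}-\mu_k\geq\Delta$ then gives $z_1-\mu_{k+1}\leq d\!\left(\tfrac{(1-\tau)p_0}{w_k}\right)-\Delta$. Substituting and applying the increasing map $\finv=\Phi^{-1}$ yields
\[
z_2-\mu_{k+1}\;\leq\;\finv\!\left[\frac{\tau w_k}{(1-\tau)w_{k+1}}+\Phi\!\left(d\!\left(\tfrac{(1-\tau)p_0}{w_k}\right)-\Delta\right)\right]\;=\;-\tilr(w_k,w_{k+1}),
\]
the asserted bound, with Assumption~\ref{ass:tau} guaranteeing $\tilr\geq0$ so that $z_2$ is genuinely placed to the left of $\mu_{k+1}$.

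The uniform estimate $-\tilr(w_k,w_{k+1})\leq-\tilr(w_{max},w_{min})$ then follows from monotonicity of $\tilr$: it is decreasing in its first argument and increasing in its second (both $\frac{\tau w_1}{(1-\tau)w_2}$ and $\Phi(d(\frac{(1-\tau)p_0}{w_1})-\Delta)$ move the right way), while $w_k\leq w_{max}=1-(K-1)w_{min}$ and $w_{k+1}\geq w_{min}$ by Assumption~\ref{ass:wmin}. The main obstacle I anticipate is precisely the second bound: taming the leakage term $\Phi(z_1-\mu_{k+1})$ needs the endpoint-distance clause of Proposition~\ref{lem:distance-to-center} combined with the case analysis that exploits $z_2>\mu_k$, and one must invoke Assumption~\ref{ass:tau} to keep the resulting $\tilr$ nonnegative so the separation is informative. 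The center-of-mass bound of the first part, by contrast, is a one-line application.
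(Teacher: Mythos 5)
Your proof is correct and follows essentially the same route as the paper's: both parts reduce to Proposition \ref{lem:distance-to-center} (the center-of-mass clause for (\ref{eq:bound-mass-center-almost-pure-c1}), the endpoint clause for the $z_2$ bound), combined with the identical purity-to-leakage estimate $P_{k+1}[\clust]\leq \tau w_k/\bigl((1-\tau)w_{k+1}\bigr)$, inversion of $\Phi$, and the monotonicity of $\tilr$ in its two arguments. Your reading of the second display as a bound on $z_2-\mu_{k+1}$ (rather than the literal $z_2-\mu_k$, which would contradict $z_2>\mu_k$ once $\tilr\geq 0$) is exactly what the paper's own argument derives, and your explicit control of the leakage term via $z_1-\mu_k\leq d\bigl((1-\tau)p_0/w_k\bigr)$ and the case analysis on the endpoint clause just makes rigorous the step the paper leaves implicit in its terse ``$\Phi(c_1-\mu_{k+1})$'' inequality.
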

\begin{proposition}[Corollary] If $c_k>\mu_k$ and $k<K$ then
\banum
c-\mu_k&\leq (1-\tau)R(w_k)+\tau(\Delta-\tilr(w_k,w_{k+1}))
\label{eq:bound-mass-center-almost-pure-1}\\
&\leq(1-\tau)R(w_{max})+\tau(\Delta-\tilr(w_{max},w_{min}))
\label{eq:bound-mass-center-almost-pure-2}\\
&\leq(1-\tau)R(w_{max})+\tau\Delta
\label{eq:bound-mass-center-almost-pure-3}\\
{\rm else}\nonumber\\
\mu_k-c&\leq R(w_k)\;\leq\;R(w_{max})
\label{eq:bound-mass-center-almost-pure-4}
\;\;\;
c-\mu_k \leq \tau(\Delta-\tilr(w_k,w_{k+1}))
\eanum
\end{proposition}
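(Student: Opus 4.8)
The plan is to obtain this as a genuine corollary of Proposition~\ref{lem:almost-pure} by writing the mixture center of mass $c$ as a convex combination of the single-Gaussian center $c_k$ and the center of mass of the remaining components, and then letting the purity hypothesis control the weight. Splitting the integral defining $c$ over $\clust=[z_1,z_2]$ according to the component each unit of mass comes from gives
\[
c=\lambda c_k+(1-\lambda)c',\qquad \lambda=\frac{w_kP_k[\clust]}{P[\clust]}\ge 1-\tau,
\]
where $c'$ denotes the center of mass of $\clust$ under the off-$k$ components $\sum_{j\ne k}w_jP_j$ (renormalized), so that $c'\in[z_1,z_2]$ and in particular $c'\le z_2$. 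Equivalently $c-\mu_k=\lambda(c_k-\mu_k)+(1-\lambda)(c'-\mu_k)$, and by \eqref{eq:bound-mass-center-almost-pure-c1} we have $|c_k-\mu_k|\le R(w_k)$, while \eqref{eq:bound-mass-center-almost-pure-0} bounds the right extent $z_2-\mu_k$ by $\Delta-\tilr(w_k,w_{k+1})$, hence also $c'-\mu_k\le\Delta-\tilr(w_k,w_{k+1})$.

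First I would treat the case $c_k>\mu_k$, splitting further on the sign of $c'-c_k$. If $c_k\le c'$, the map $\lambda\mapsto\lambda(c_k-\mu_k)+(1-\lambda)(c'-\mu_k)$ is nonincreasing, so its value at the true $\lambda\ge 1-\tau$ is at most its value at $\lambda=1-\tau$, namely $(1-\tau)(c_k-\mu_k)+\tau(c'-\mu_k)\le(1-\tau)R(w_k)+\tau(\Delta-\tilr(w_k,w_{k+1}))$, which is \eqref{eq:bound-mass-center-almost-pure-1}. If instead $c'<c_k$, the same map is nondecreasing, so $c-\mu_k\le c_k-\mu_k\le R(w_k)$, and it then remains only to check $R(w_k)\le\Delta-\tilr(w_k,w_{k+1})$ to recover \eqref{eq:bound-mass-center-almost-pure-1}. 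Inequalities \eqref{eq:bound-mass-center-almost-pure-2}--\eqref{eq:bound-mass-center-almost-pure-3} are then pure monotonicity: $R$ is increasing in its argument and $\tilr$ is monotone in $(w_k,w_{k+1})$ (inherited from the convexity and monotonicity of $r$ and $d$ established just above, and already used for \eqref{eq:bound-mass-center-almost-pure-0}), so replacing $w_k$ by $w_{max}$ and $w_{k+1}$ by $w_{min}$ only enlarges the right-hand side, and dropping the term $\tau\,\tilr(w_{max},w_{min})\ge 0$, nonnegative by Assumption~\ref{ass:tau}, gives \eqref{eq:bound-mass-center-almost-pure-3}.

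For the complementary case $c_k\le\mu_k$ the term $\lambda(c_k-\mu_k)$ is nonpositive, so $c-\mu_k\le(1-\lambda)(c'-\mu_k)\le\tau(z_2-\mu_k)\le\tau(\Delta-\tilr(w_k,w_{k+1}))$, which is the second bound in \eqref{eq:bound-mass-center-almost-pure-4}; and $\mu_k-c\le R(w_k)$ would follow from $c\ge c_k$ together with $\mu_k-c_k\le R(w_k)$, i.e. from the off-$k$ mass not dragging the center of mass to the left of $c_k$, after which $R(w_k)\le R(w_{max})$ by monotonicity. The main obstacle is precisely the two places where the off-$k$ center of mass $c'$ must be controlled on the ``wrong'' side: establishing $R(w_k)\le\Delta-\tilr(w_k,w_{k+1})$ in the first case, and ruling out a large leftward excursion of $c'$ in the second. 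Both are false for arbitrary mixtures but should hold here because the off-$k$ mass is at most a fraction $\tau$ of $P[\clust]$ and, by the separation assumptions, is confined by the distance $\Delta$ to the neighbouring means; quantifying ``a $\tau$-fraction of confined mass shifts the center of mass by at most $\tau(\Delta-\tilr)$, and never past $c_k$ on the left'' is where the separation hypotheses and the properties of $r$ and $d$ enter, rather than the convexity bookkeeping, which is routine.
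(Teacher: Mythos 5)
Your decomposition $c=\lambda c_k+(1-\lambda)c'$ with $\lambda=w_kP_k[\clust]/P[\clust]\ge 1-\tau$ is exactly the paper's starting point (its weighted-sum formula \eqref{eq:c-weighted-sum}), and your treatment of \eqref{eq:bound-mass-center-almost-pure-2}--\eqref{eq:bound-mass-center-almost-pure-3} by monotonicity and $\tilr\ge 0$ matches the paper. But the two ``obstacles'' you flag at the end are genuine gaps in your write-up, and neither is where separation hypotheses must enter; both are artifacts of how you organized the argument. The first gap comes from your case split on the sign of $c'-c_k$: in the subcase $c'<c_k$ you only get $c-\mu_k\le c_k-\mu_k\le R(w_k)$ and then need the unproven inequality $R(w_k)\le\Delta-\tilr(w_k,w_{k+1})$, which is not supplied by Assumptions \ref{ass:local-purity} and \ref{ass:tau} (the only ones Proposition \ref{lem:almost-pure} invokes). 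The paper never needs this because it bounds $c'$ by $z_2$ \emph{before} using $\lambda\ge 1-\tau$: since $c_k\le z_2$ always holds ($c_k$ is a center of mass of a distribution supported on $[z_1,z_2]$), the map $\lambda\mapsto\lambda c_k+(1-\lambda)z_2$ is nonincreasing, so
\[
c=\lambda c_k+(1-\lambda)c'\;\le\;\lambda c_k+(1-\lambda)z_2\;\le\;(1-\tau)c_k+\tau z_2
\]
holds uniformly, with no case distinction on the position of $c'$, and \eqref{eq:bound-mass-center-almost-pure-1} follows directly from Proposition \ref{lem:almost-pure}.

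The second gap is in the ``else'' bounds \eqref{eq:bound-mass-center-almost-pure-4}: you correctly note that $\mu_k-c\le R(w_k)$ requires ruling out a leftward excursion of $c'$, and you leave that open. The paper closes it with a step you omitted entirely, namely the initial reduction to $z_1\ge\mu_k$ (arguing that $z_1<\mu_k$ only brings $c$ closer to $\mu_k$). Under that normalization every point of $\clust$, hence also $c'$, lies to the right of $\mu_k$, so $(1-\lambda)(c'-\mu_k)\ge 0$ and $\mu_k-c\le\lambda(\mu_k-c_k)\le R(w_k)$ is immediate; alternatively, in the extreme cases the paper actually states ($k=1$ with $c_1\le\mu_1$, or $k=K$ with $c_K>\mu_K$), all off-$k$ components have means on one side of $\mu_k$, so a monotone likelihood ratio argument gives $c_{k'}\ge c_k$ for every $k'\ne k$ and the off-$k$ mass can only pull $c$ toward $\mu_k$. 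In short: same decomposition as the paper, correct convexity bookkeeping, but the proof is incomplete as written, and the missing pieces are supplied not by separation or by properties of $r$ and $d$, but by reordering the two bounding steps and by the $z_1\ge\mu_k$ normalization.
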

By symmetry, a similar statement involving
$\mu_{k-1},w_{k-1},\mu_k,w_k$ and $c$ holds when $z_2>\mu_k$ is
replaced by $z_1<\mu_k$. With it we have essentially shown that an
almost pure cluster which is not small cannot be too far from its
Gaussian center $\mu_k$.

{\bf Proof of Proposition \ref{lem:almost-pure}} (\ref{eq:bound-mass-center-almost-pure-c1}) follows from  Proposition \ref{lem:distance-to-center}.
Now for bounding $z_2$, in the case $k<K$.  Because $(1-\tau)P[\clust]\leq w_k$ (the
contribution of Gaussian $k$ to cluster $\clust$ cannot exceed all of $w_k$)
we have
$
P_{k+1}[C]\,\leq\,\frac{\tau P[\clust]}{w_{k+1}}\;\leq\;\frac{\tau
w_k}{(1-\tau)w_{k+1}}$ and $
P_{k+1}[C]\,=\,\Phi(z_2-\mu_{k+1})-\Phi(z_1-\mu_{k+1})
\,\geq\,\Phi(z_2-\mu_{k+1})-\Phi(c_1-\mu_{k+1})$
from which the first inequality in
(\ref{eq:bound-mass-center-almost-pure-0}) follows. The function
$\tilr$ is increasing with $w_k$ when $w_{k+1}$ constant or
$w_{k+1}={\rm constant}-w_1$, which gives the second bound. \hfill $\Box$

{\bf Proof of the corollary}
First note that we can safely assume $z_1\geq\mu_k$. If
the result holds for this case, then it is  easy to see that having
$z_1<\mu_k$ only brings the center of mass $c$ closer to $\mu_k$.
\banum \label{eq:c-weighted-sum}
c=\frac{w_kP_k[C]c_k+\sum_{k'\neq k}w_{k'}P_{k'}[C]c_{k'}}{P[\clust]}
\leq(1-\tau)c_k+\tau z_2
\eanum
Now
(\ref{eq:bound-mass-center-almost-pure-1},\ref{eq:bound-mass-center-almost-pure-2})
follow from Proposition \ref{lem:almost-pure}.  For
(\ref{eq:bound-mass-center-almost-pure-3}) Assumption \ref{ass:tau}
assures that $\tilr\geq 0$. As a consequence, this bound
is convex in $w_k$.
If $k=1$ and $c_1\leq \mu_1$, or $k=K$ and $c_K>\mu_K$ then the second term in the sum
(\ref{eq:c-weighted-sum}) pulls $c_1$ in the direction of $\mu_1$
(respectively $c_K$ in the direction of $\mu_K$) and we can get the
tighter bounds (\ref{eq:bound-mass-center-almost-pure-4}). \hfill $\Box$

In conclusion, we have shown now that if the unpruned center $c$ ``belongs'' to Gaussian $k$, then
\ba
c\,\in\,\tila_k(w_k)\;=\;[\,\mu_k-R^-_\tau(w_k),\,\mu_k+R^+_\tau(w_k)\,]
\ea
whith $R^-_\tau(w_k)=(1-\tau)R(w_k)+\tau(\mu_k-\mu_{k-1})$,
$R^+_\tau(w_k)=(1-\tau)R(w_k)+\tau(\mu_{k+1}-\mu_k)$,
$R^-_\tau(w_1)=R(w_1)$, and $R^+_\tau(w_K)=R(w_K)$.

\subsection{Step \ref{step:HS} of \initalg. Selecting the centers by the {\sc
MinDiam} heuristic}
\label{sec:step4}

From Section \ref{sec:impure} we know that w.h.p all centroids
unpruned at this stage are $(1-\tau)$ pure. We want to ensure that
after the selection in step \ref{step:HS} each Gaussian has at least
one $\cenunu_j$ near its center. For this, it is sufficient that the
regions $\tila_k$ are disjoint, i.e
\commentout{ By proposition \ref{lem:almost-pure}, we
know that such a centroid is no further than the r.h.s of
(\ref{eq:bound-mass-center-almost-pure-2}) from its Gaussian's center.
Each Gaussian has at least one centroid if  For this we need,}
\hspace{-2em}\ba
(\mu_{k+1}-\mu_k)-(R^+_\tau(w_k)+R^-_\tau(w_{k+1}))
&>
R^-_\tau(w_k)+R^+_\tau(w_k)\\
(\mu_{k+1}-\mu_k)-(R^+_\tau(w_k)+R^-_\tau(w_{k+1}))
&>
R^-_\tau(w_{k+1})+R^+_\tau(w_{k+1})
\ea
for all $k$.  Replacing $R^\pm_\tau(w_k)$ with their definitions and
optimizing over all possible $w_{1:K}\geq w_{min}$ and for all $\Delta
\mu \leq \mu_{k+1}-\mu_k\leq \Delta_{max}$ produces
\[
\tila_k\;=\;[\mu_k\pm(1-\tau)R(w_{max})\pm\tau\Delta_{max}]
\]
and
\begin{assumption}\label{ass:bounds-for-hs}
$(1-3\tau)\Delta-\tau \Delta_{max}\;>\;[3R(w_{max})+R(w_{min})](1-\tau).$
\end{assumption}

\commentout{
\subsection{Back to Theorem \ref{thm:initialization0}}
Combining the results from above, we can now complete the statement of
\ref{thm:initialization0}.
\begin{theorem}[\initalg~ Initialization]\label{thm:initialization} Let
$f=\sum_1^Kw_k\phi_{\mu_k,1}$ be a mixture of $K$ Gaussians with
centers $\mu_{1:K},\,\mu_k\leq\mu_{k+1}$, and unit variance. Let
$w_{min},\Delta, \Delta_{max},\tau,\delta_{miss}$ be so that assumptions
\ref{ass:wmin}, \ref{ass:local-purity}, \ref{ass:2dz<dmu}, \ref{ass:tau},
\ref{ass:bounds-for-hs} hold.  If we run Algorithm \initalg~ with any
$2 \leq K' \leq {1}/{w_{min}}$, then with probability
$1-2\delta_{miss}-\delta_{impure}$ over the initialization there exist
$K$ disjoint neighborhoods $\tilde{A}_k$, one for each true mean
$\mu_k$, so that all $K'$ centers $\cenunu_{k'}$ are contained in
$\bigcup_k \tila_k$ and statements {\bf T6$\mathbf{=}$, T6$\mathbf{<}$, T6$\mathbf{>}$} hold.
\end{theorem}
}

\section{Simulations} \label{sec-simulations}

\begin{figure*}
\begin{center}
\includegraphics[width=0.20\textwidth]{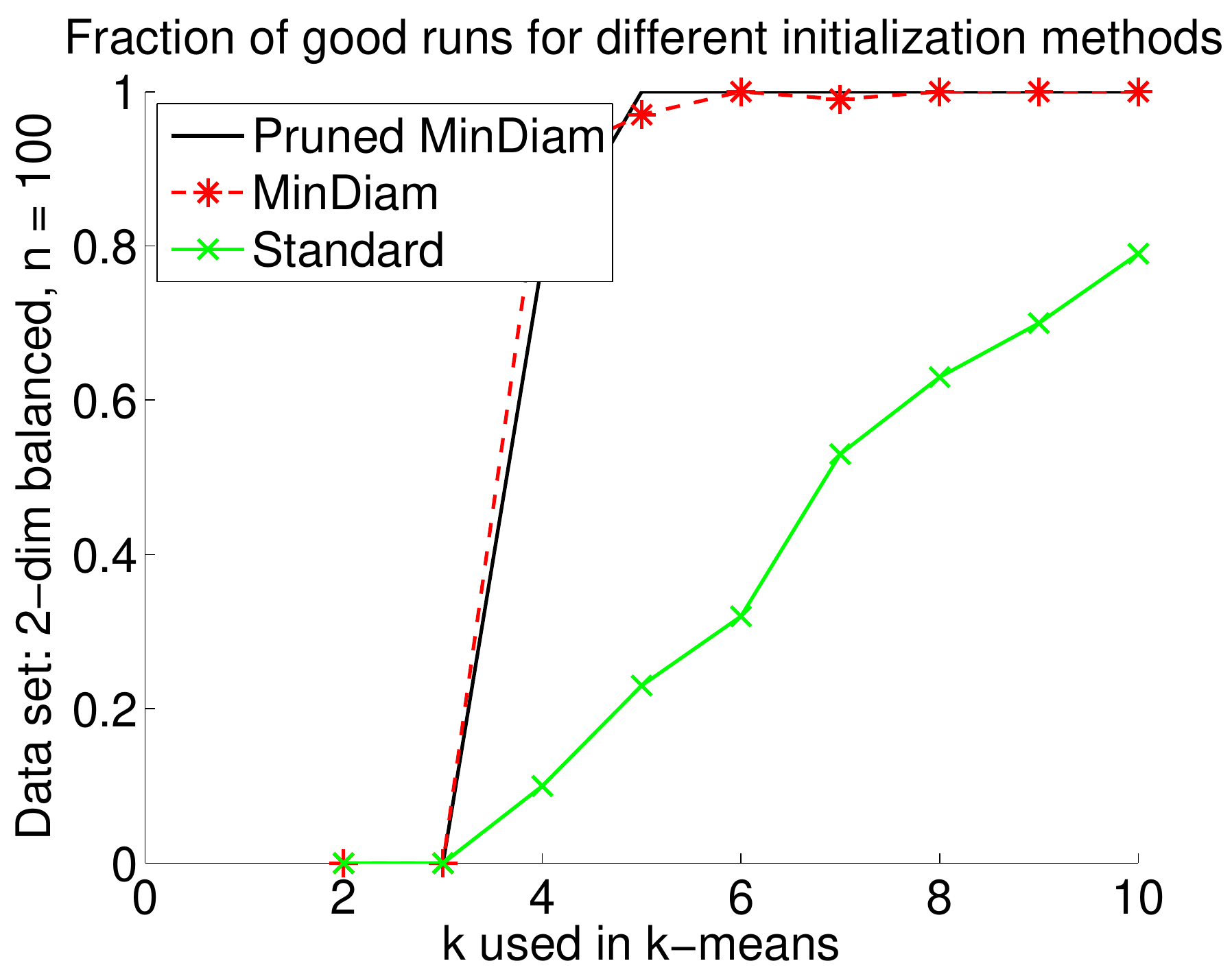}\hfill
\includegraphics[width=0.20\textwidth]{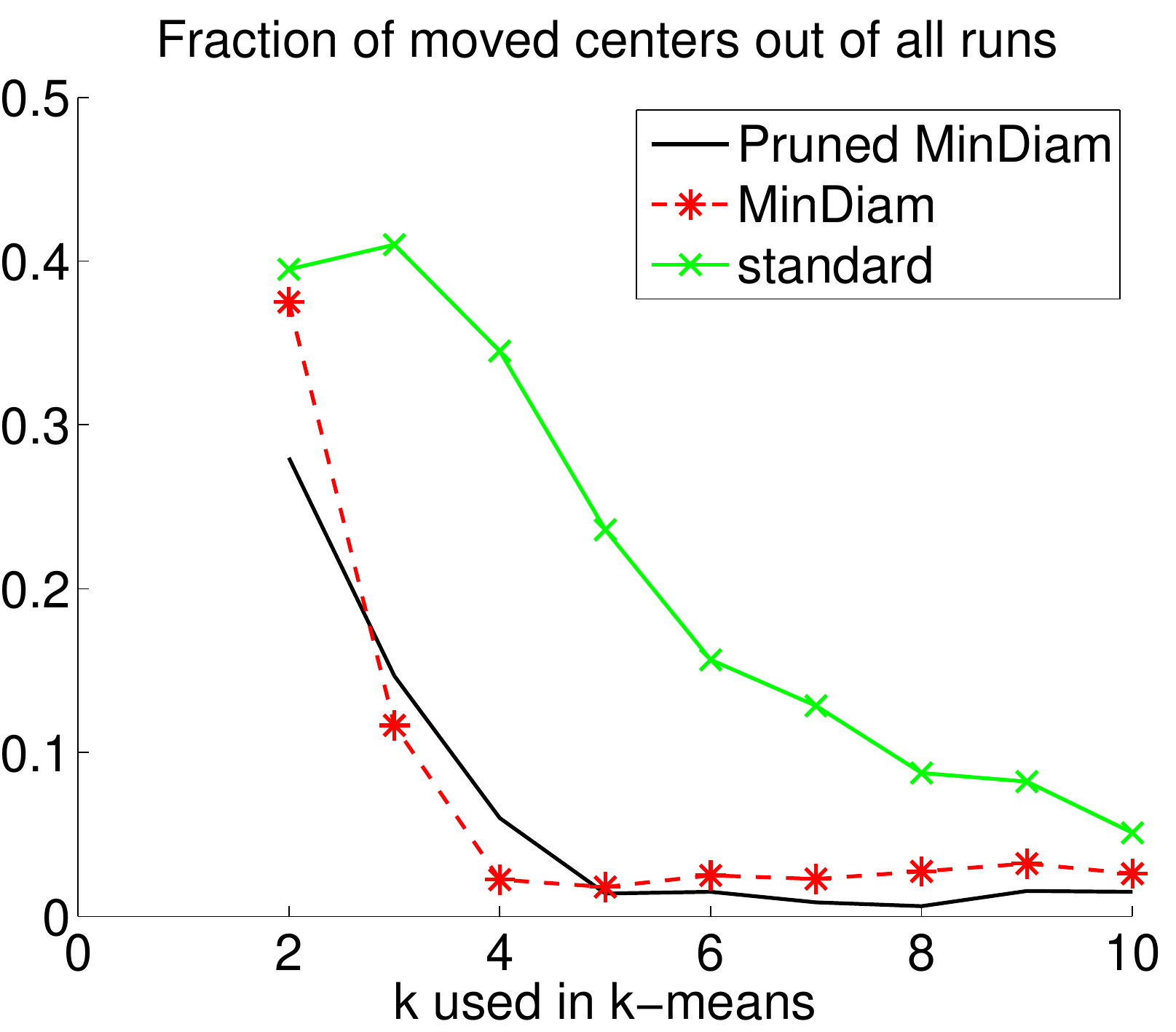}\hfill
\includegraphics[width=0.20\textwidth]{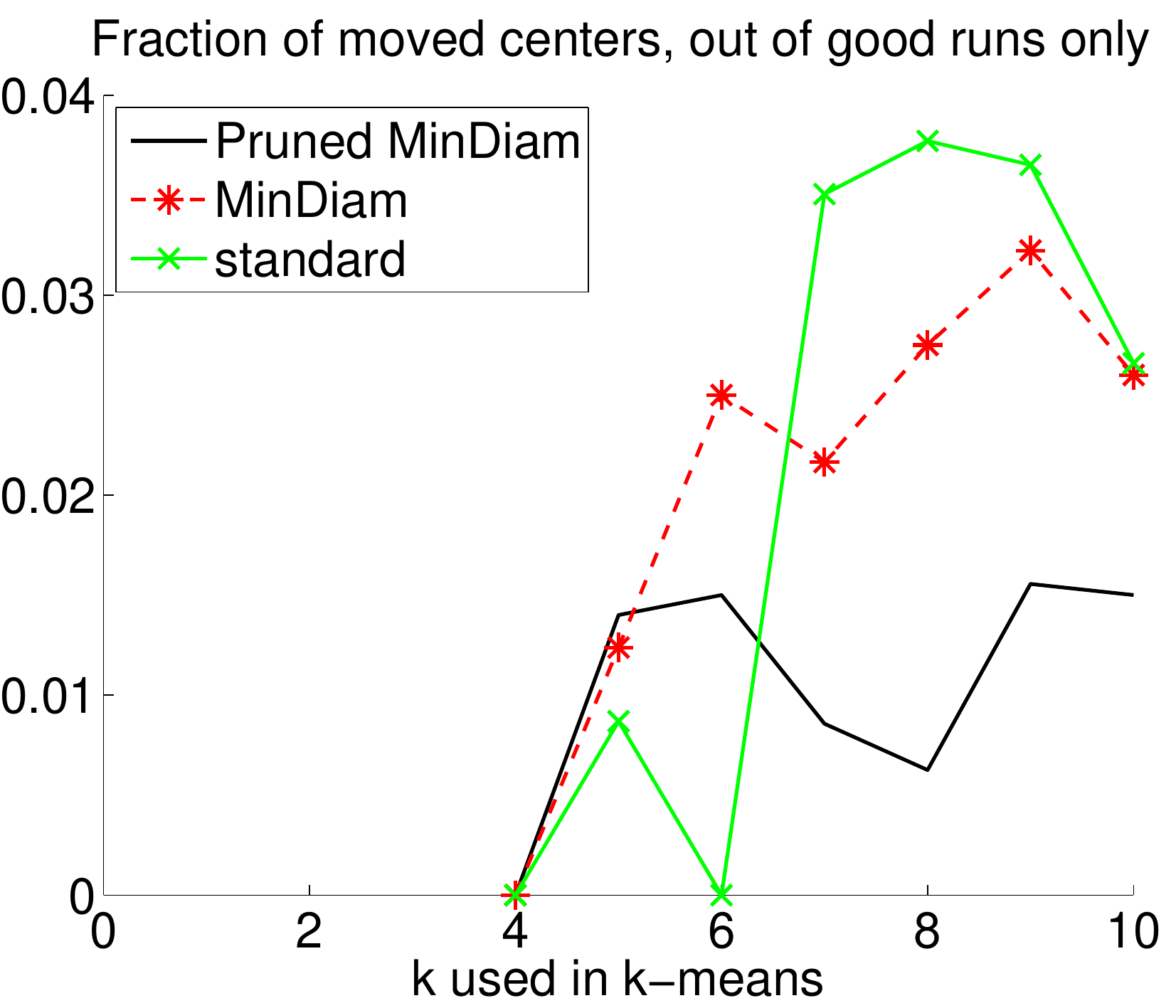}\hfill
\includegraphics[width=0.20\textwidth]{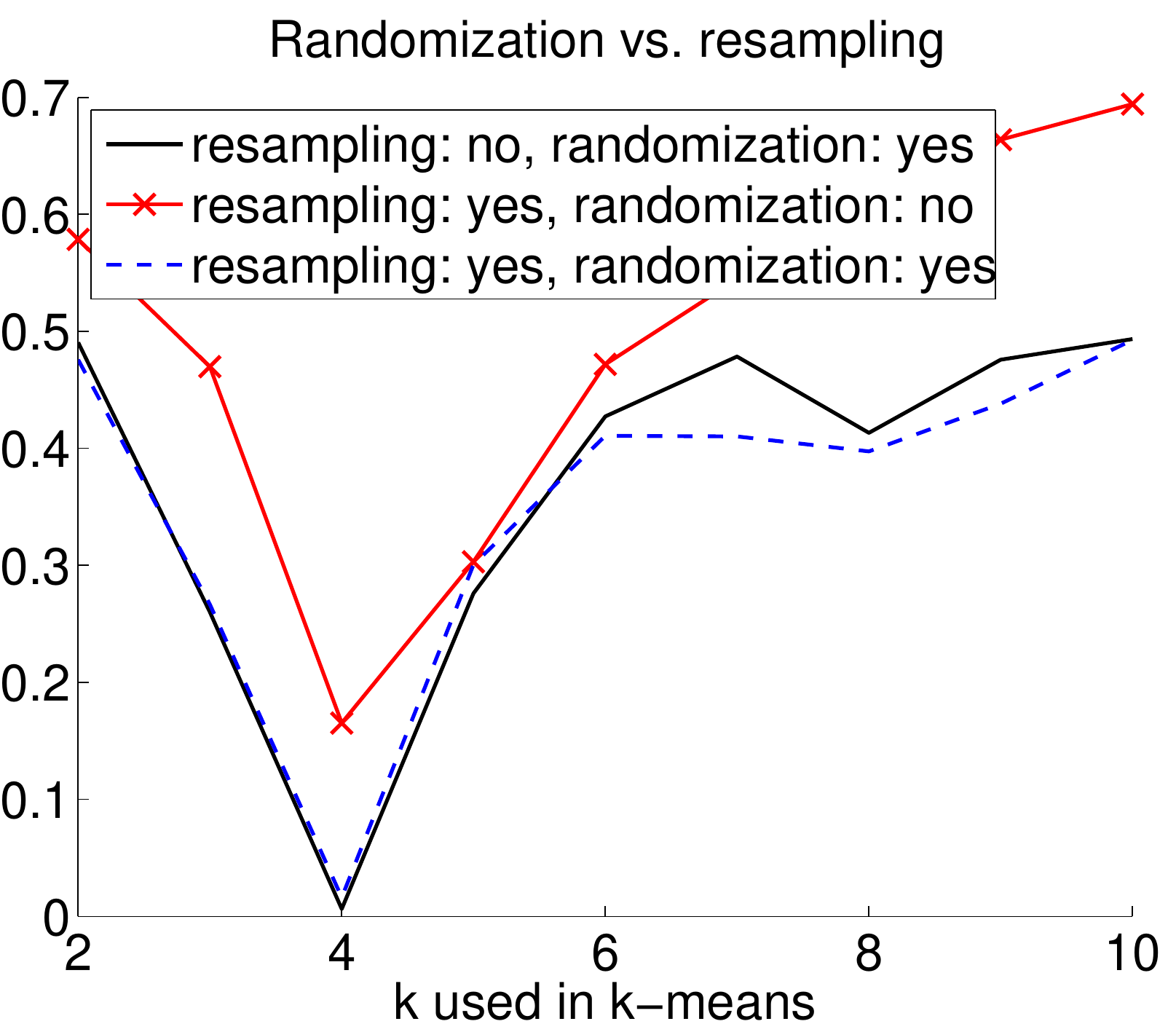}\\
\includegraphics[width=0.20\textwidth]{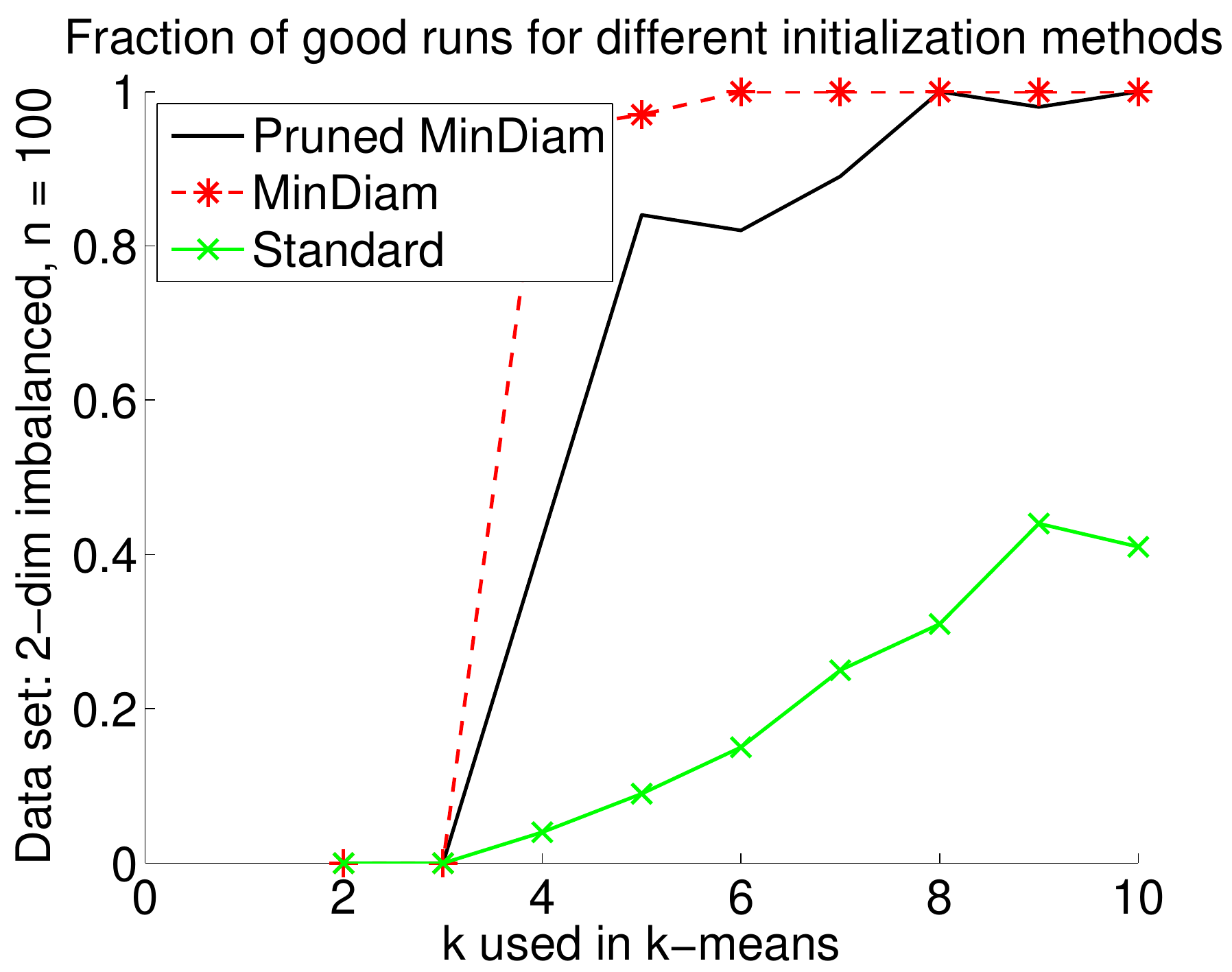}\hfill
\includegraphics[width=0.20\textwidth]{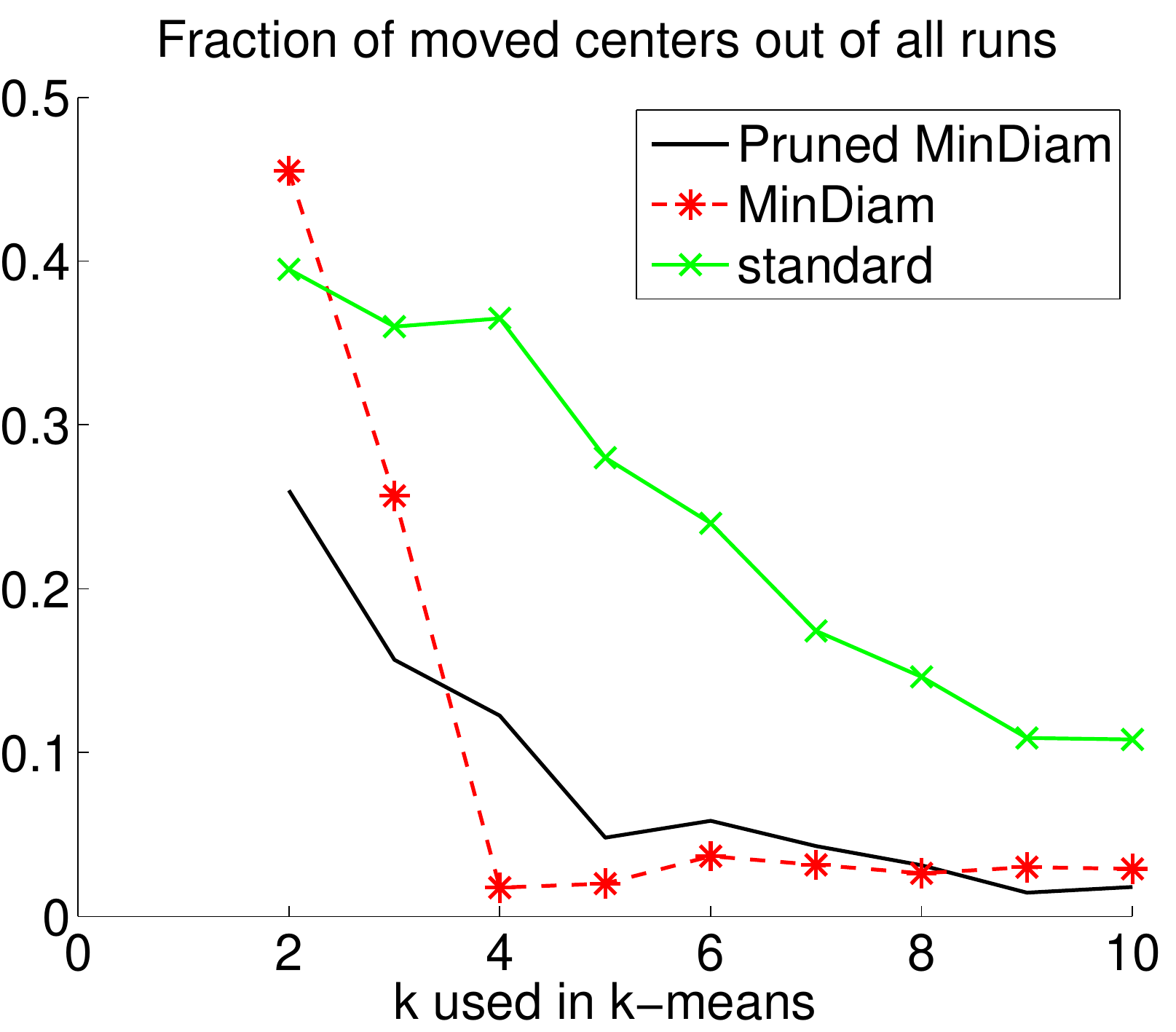}\hfill
\includegraphics[width=0.20\textwidth]{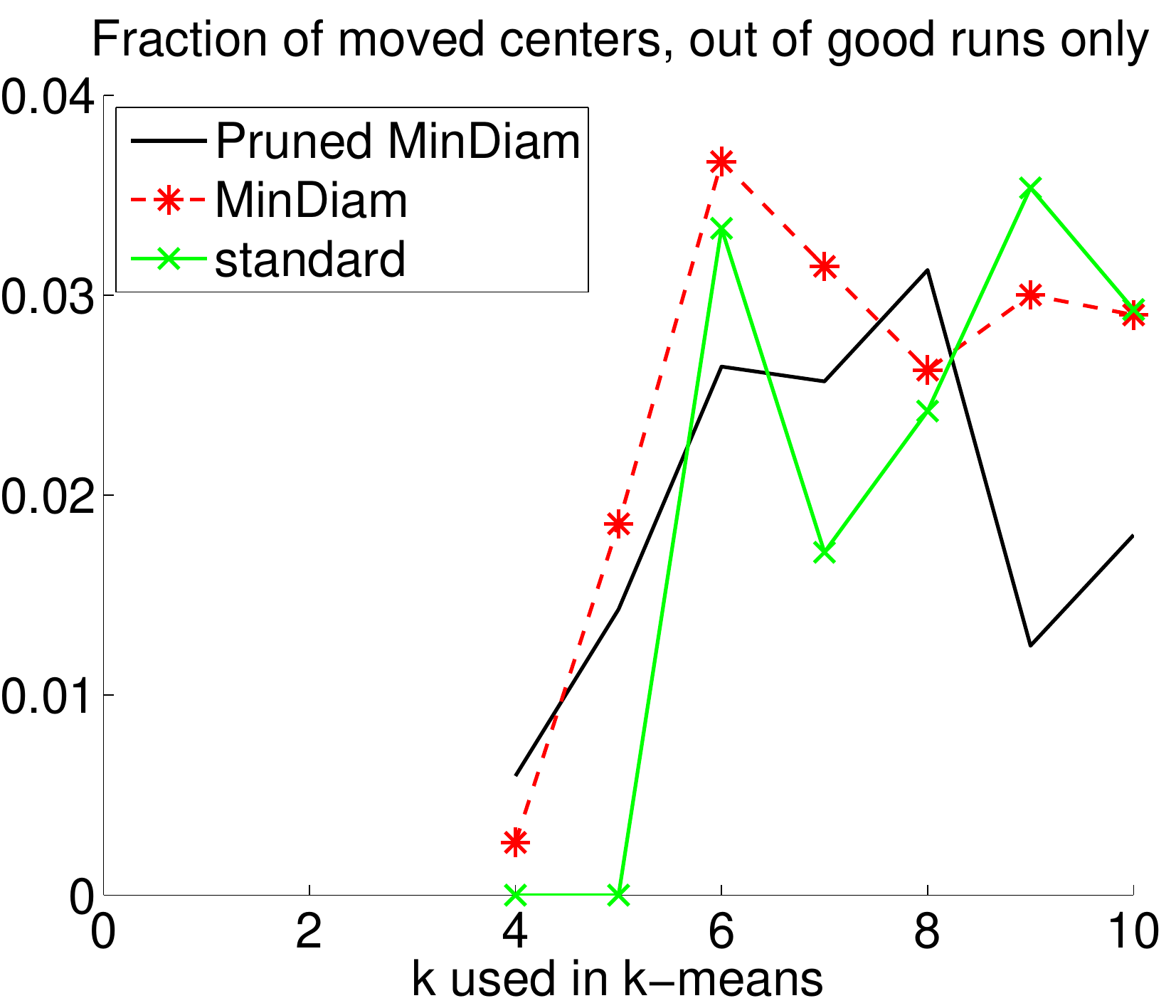}\hfill
\includegraphics[width=0.20\textwidth]{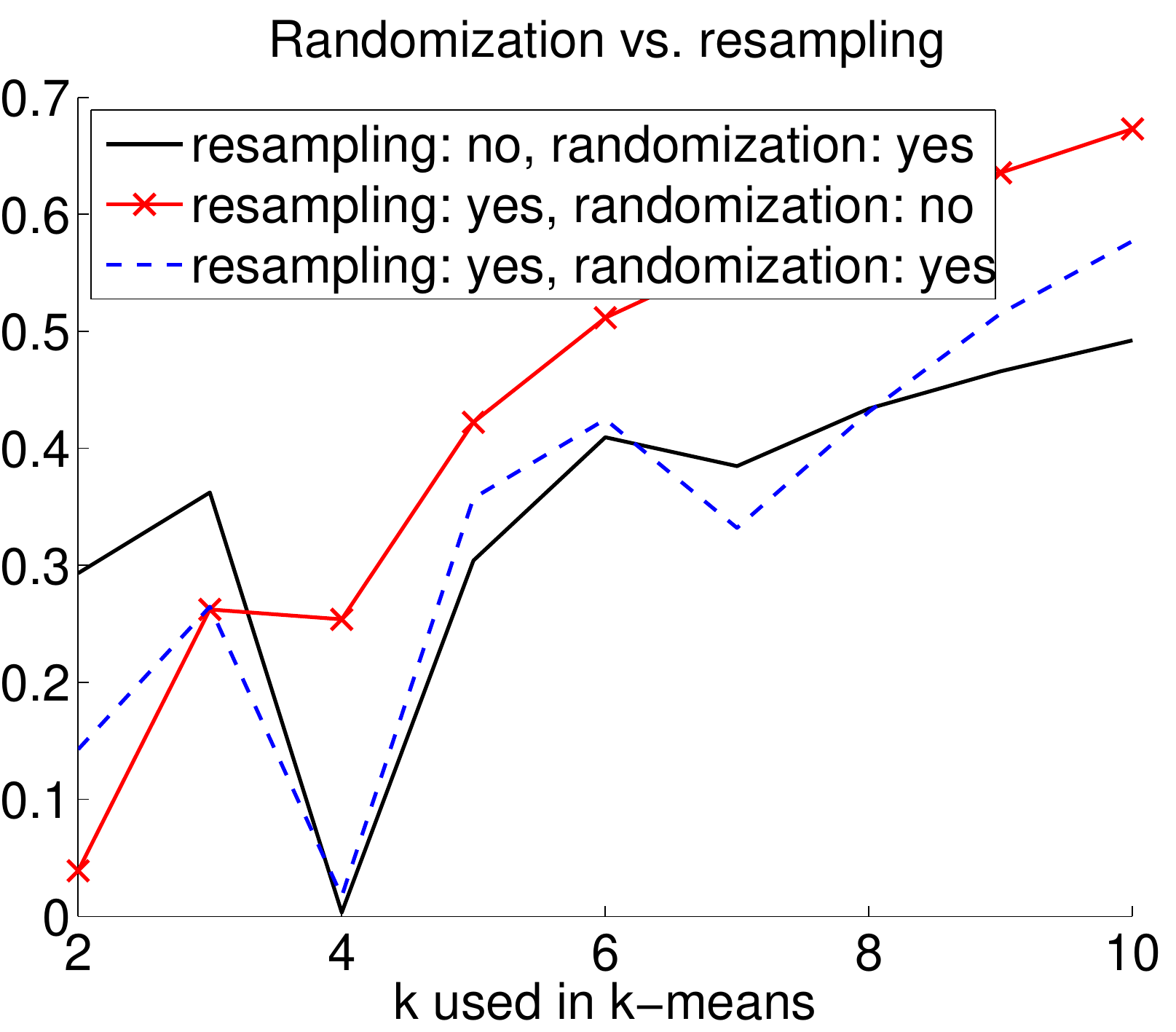}\\
\includegraphics[width=0.20\textwidth]{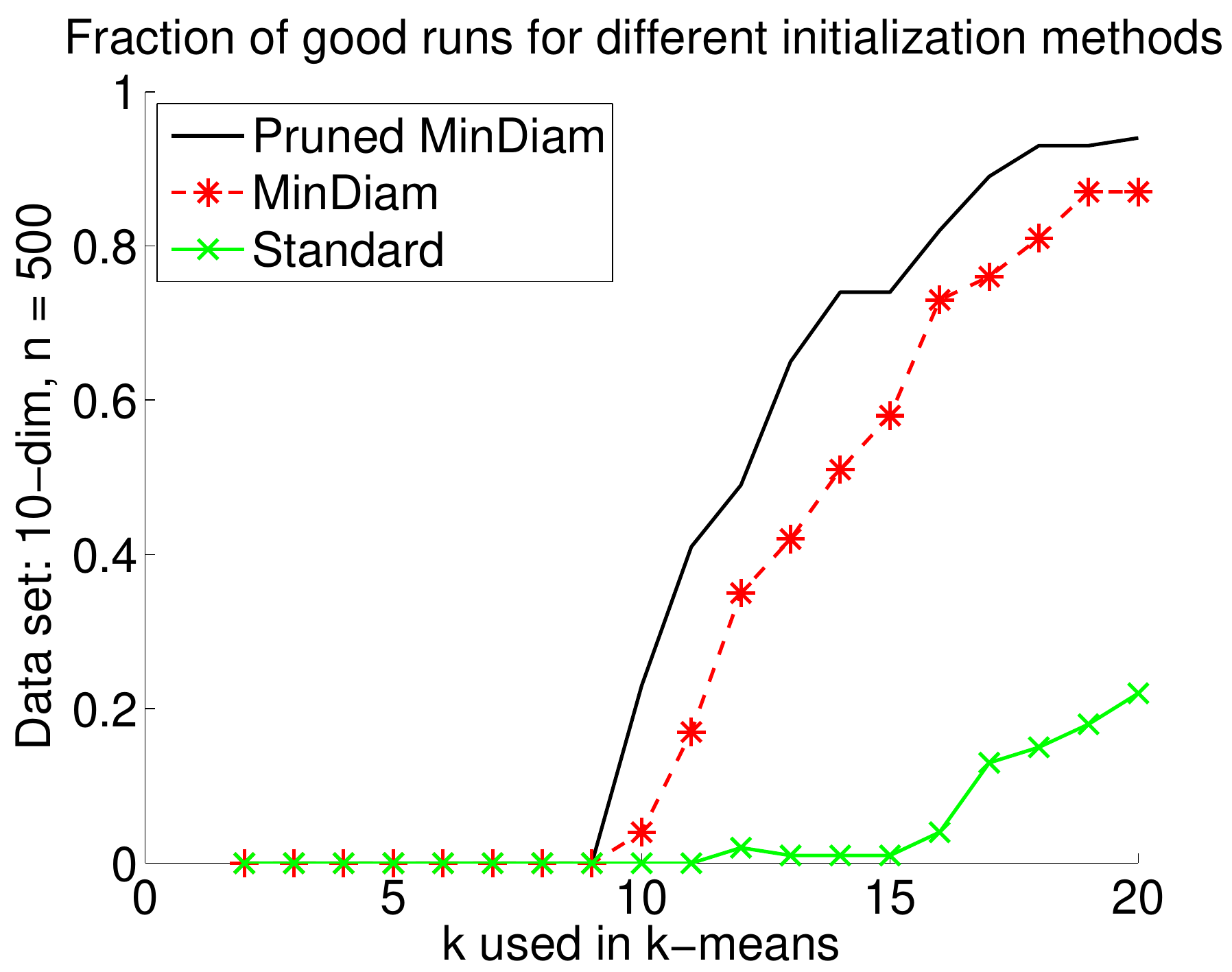}\hfill
\includegraphics[width=0.20\textwidth]{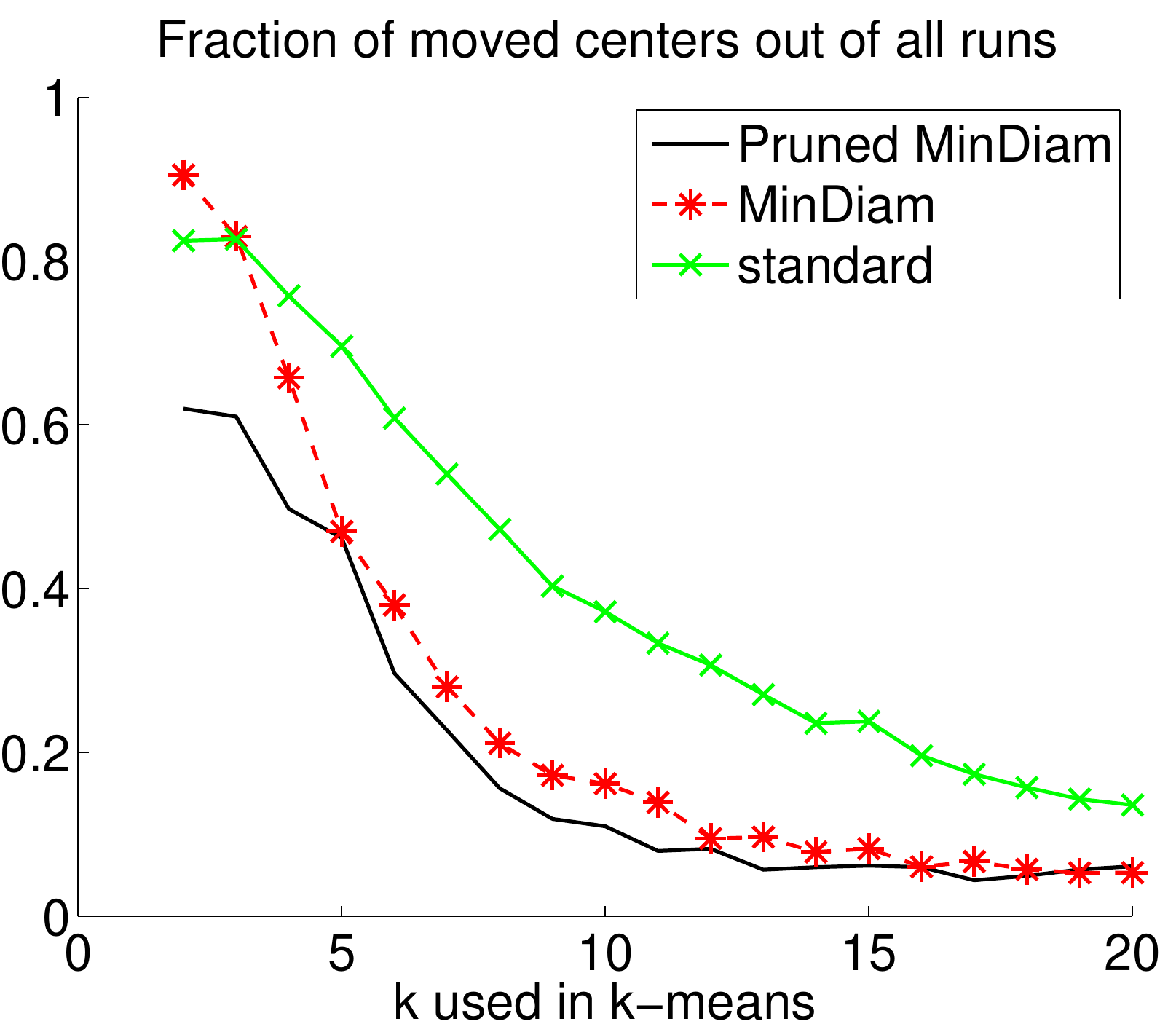}\hfill
\includegraphics[width=0.20\textwidth]{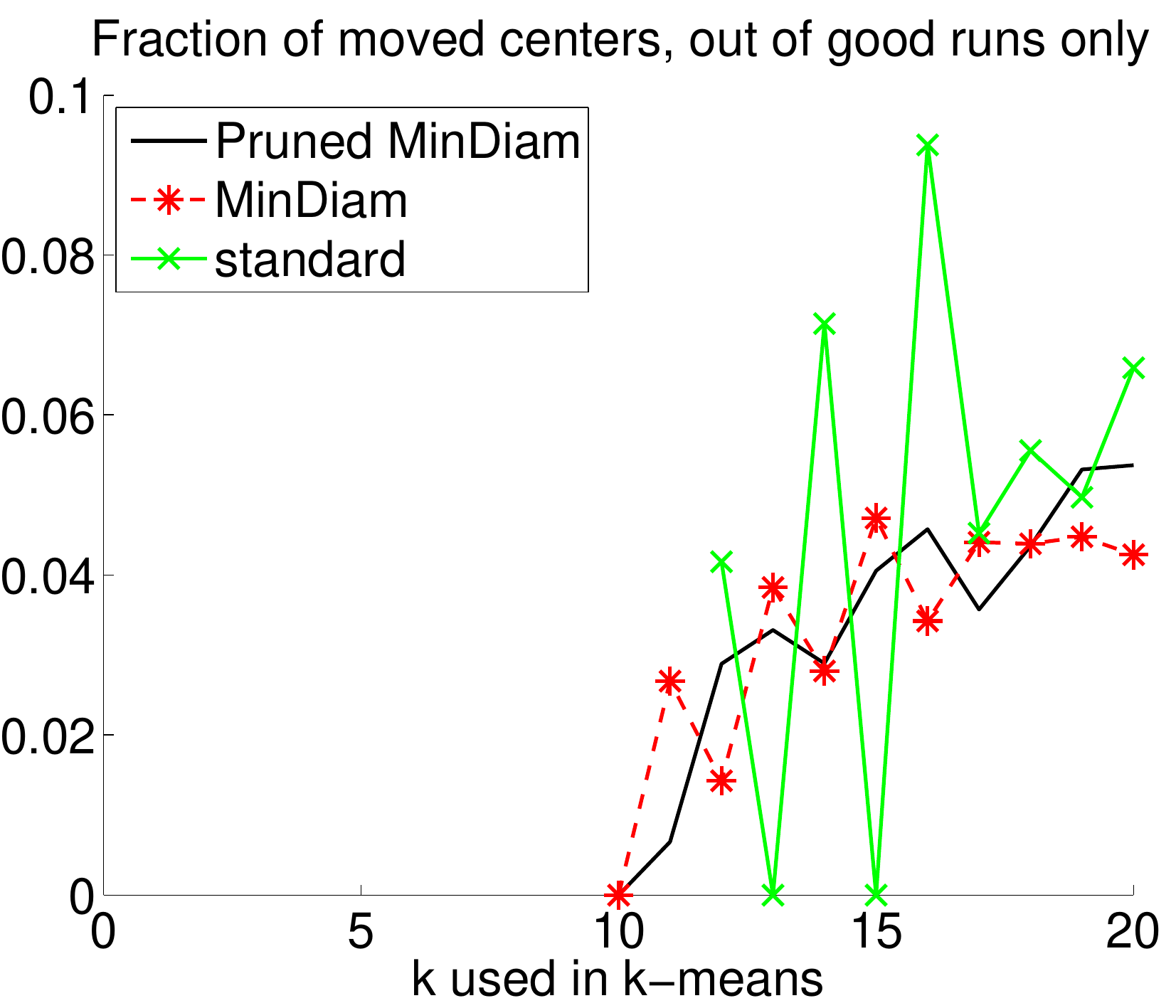}\hfill
\includegraphics[width=0.20\textwidth]{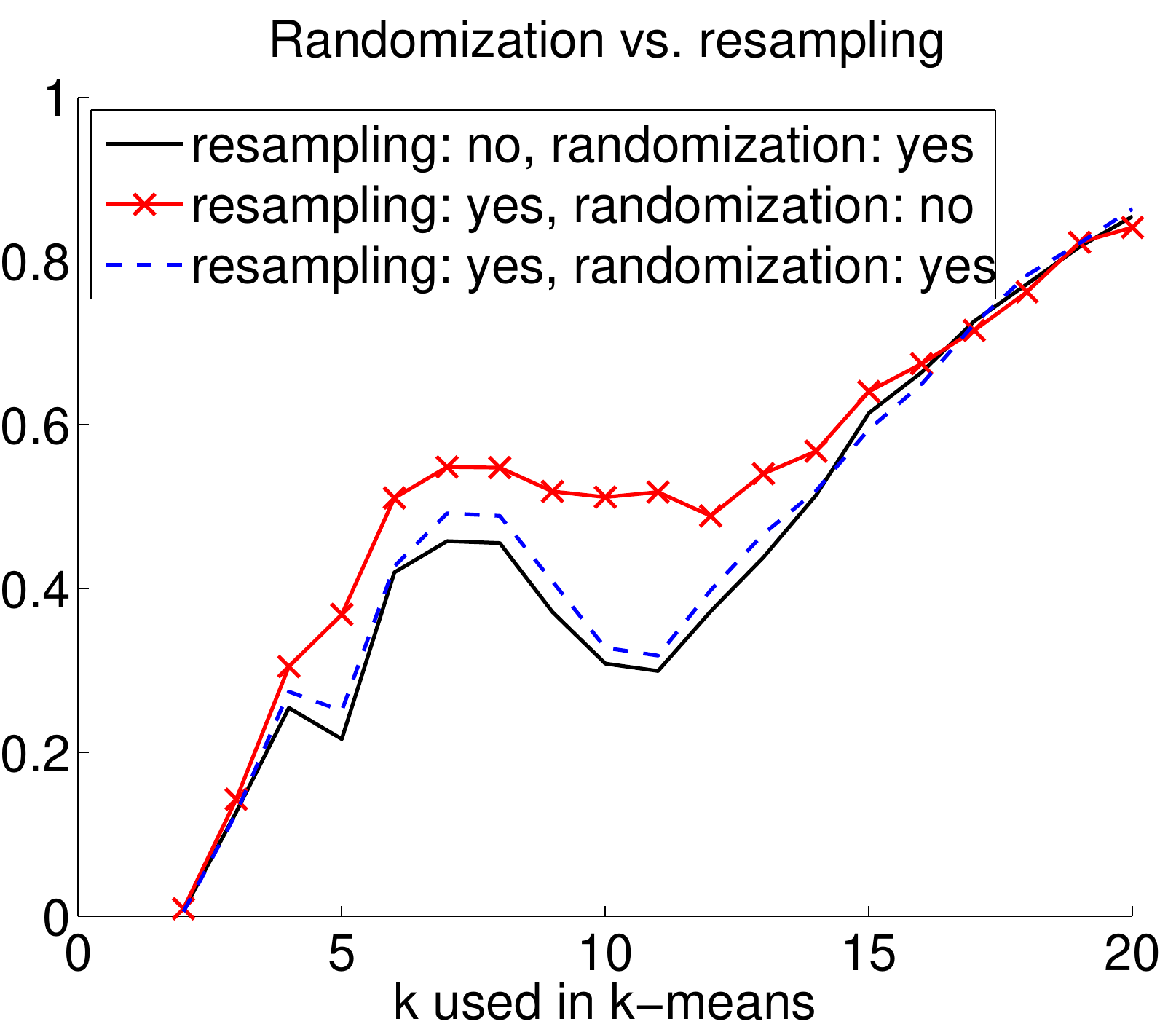}\\
\end{center}
\caption{Simulation results. First row: data set ``two dim four
  balanced clusters''. Second row: data set ``two dim four
  imbalanced clusters''. Third row: data set ``ten dim ten clusters''
  (see text for details). }
\label{fig-simulations}
\end{figure*}

In this section we test our conjecture in practice and run some simulations to emphasize the different theoretical results of the previous sections. We also investigate whether it is necessary to look at the stability of $k$-means with respect to the random drawing of the data set. In the following when we refer to randomization we mean with respect to the initialization while the resampling corresponds to the random drawing of the data set.
\newline

{\bf Setup of the experiments. } As distributions we consider mixtures
of Gaussians in one, two, and ten dimensions. Each mixture consists of
several, reasonably well separated clusters. We report the results on
three such data sets:

\blobb{``Two dim four balanced clusters'':
Mixture of four Gaussians in $\R^2$ with means
$(-3.3)$, $(0,0)$, $(3, 3)$, $(3, -3)$; the covariance matrix of all
clusters is diagonal with entries $0.2$ and $1$ on the diagonal; the
mixing coefficients are uniform, that is all clusters have the same
weight.
}

\blobb{``Two dim four imbalanced clusters'': As above, but with mixing coefficients  $0.1, 0.5, 0.3,
0.1$.
}

\blobb{``Ten dim ten clusters'': Mixture of ten Gaussians in $\R^{10}$ with means $(i, 0, 0, ...)$ for $i=1, ..., 10$.  All Gaussians are spherical with variance 0.05 and mixing coefficients are uniform.
}

As clustering algorithm we use the standard $k$-means algorithm with
the following initializations:

\blobb{Standard initialization: randomly pick $K'$ data points. }
\blobb{{\sc MinDiam} initialization, coincides with Step 5 in Fig.~\ref{fig:init-alg}.}
\blobb{\initalg initialization, as analyzed in Section \ref{sec-init}
  (see Fig.~\ref{fig:init-alg}}
\blobb{Deterministic initialization: $K'$ fixed points sampled from the
  distribution. }

For a range of parameters $K' \in \{2, ..., 10\}$ we compute the
clustering stability by the following protocols:

\blobb{Randomization, no resampling: We draw once a data set of
  $n=100$ points from the distribution. Then we run the
  $k$-means algorithm with different initializations. }

\blobb{Resampling, no randomization: We fix a set of deterministic
  starting points (by drawing them once from the underlying
  distribution). Then we draw $100$ data sets of size $n = 100$ from the
  underlying distribution, and run $k$-means with the deterministic
  starting points on these data sets. }

\blobb{Resampling and randomization: we combine the  two previous
  approaches. }

Then we compute the stability with respect to the minimal matching
distance between the clusters. Each experiment was repeated 100 times,
we always report
the mean values over those
repetitions. \\

Note that all experiments were also conducted with different data set sizes
($n=50, 100, 500$), stability was computed with and without
normalization (we used the normalization suggested in
\citealp{LanRotBraBuh04}), and the $k$-means algorithm was used with and
without restarts. All those variations did not significantly effect
the
outcome, hence we omit the plots. \\

{\bf Results. } First we evaluate the effect of the different
initializations. To this end, we count how many initializations were
``good initializations'' in the sense that each true cluster contains
at least one initial center. In all experiments we consistently
observe that both the pruned and non-pruned min diameter heuristic already
achieve
many good runs if $K'$ coincides with $K$ or is
only slightly larger than the true $K$ (of course, good runs
  cannot occur for $K' < K$).  The standard random initialization does
  not achieve the same performance. See Figure
  \ref{fig-simulations}, first column.\\

  Second, we record how often %
  it was the case that initial cluster centers cross
  cluster borders. We can see in Figure~\ref{fig-simulations} (second
  column) that this behavior is strongly correlated with the number of
  ``good initializations''. Namely, for initialization methods which
  achieve a high number of good initializations the fraction of
  centers which cross cluster borders is very low.  Moreover, one can see in the third column
  of Figure \ref{fig-simulations} that centers usually do not cross
  cluster borders if the initialization was a good one. This
  coincides with our theoretical results. \\

  Finally, we compare the different protocols for computing the
  stability: using randomization but no resampling, using resampling
  but no randomization, and using both randomization and resampling,
  cf. right most plots in Figure~\ref{fig-simulations}. In simple data
  sets, all three protocols have very similar performance, see for
  example the first row in Figure \ref{fig-simulations}.  That is, the
  stability values computed on the basis of resampling behave very
  similar to the ones computed on the basis of randomization, and all
  three methods clearly detect the correct number of
  clusters. Combining randomization and resampling does not give any
  advantage. However, on the more difficult data sets (the imbalanced
  one and the 10-dimensional one), we can see that resampling without
  randomization performs worse than the two protocols with
  randomization (second and third row of Figure
  \ref{fig-simulations}). While the two protocols using randomization
  have a clear minimum around the correct number of clusters,
  stability based on resampling alone fails to achieve this. We never
  observed the opposite effect in any of our simulations (we ran many
  more experiments than reported in this paper). This shows, as we had
  hoped, that randomization plays an important role for clustering
  stability, and in certain settings can
  achieve better results than resampling alone. \\

  Finally, in the experiments above we ran the $k$-means algorithm in
  two modes: with restarts, where the algorithm is started 50 times
  and only the best solution is kept; and without restarts. The
  results did not differ much (above we report the results without
  restarts). This means that in practice, for stability based
  parameter selection one can save computing time by simply running
  $k$-means without restarting it many times (as is usually done in
  practice). From our theory we had even expected that running
  $k$-means without restarts achieves better results than with
  restarts. We thought that many restarts diminish the effect of
  exploring local optima, and thus induce more stability than ``is
  there''. But the experiments did not corroborate this intuition.

\section{Conclusions and outlook}

Previous theoretical work on model selection based on the stability of
the $k$-means algorithm has assumed an ``ideal $k$-means algorithm''
which always ends in the global optimum of the objective function. The
focus was to explain how the random drawing of sample points
influences the positions of the final centers and thus the stability
of the clustering. This analysis explicitly excluded the
question when and how the $k$-means algorithm ends in different local
optima. In particular, this means that these results only have a
limited relevance for the actual $k$-means algorithm as used in
practice. \\

In this paper we study the actual $k$-means algorithm. We have
shown that the initialization strongly influences the $k$-means
clustering results.  We also show that if one uses a ``good''
initialization scheme, then the $k$-means algorithm is stable if it is
initialized with the correct number of centers, and instable if it is
initialized with too many centers. Even though we have only proved
these results in a simple setting so far, we are convinced that the same
mechanism also holds in a more general setting. \\

These results are a first step towards explaining
why the selection of the number of clusters based on
clustering stability is so successful in practice
\citet{LanRotBraBuh04}. From this practical point of view, our
results suggest that introducing randomness by the initialization may
be sufficient for an effective model selection algorithm. Another
aspect highlighted by this work is that the situations $\kalg<\ktrue$ and
$\kalg>\ktrue$ may represent two distinct regimes for clustering, that
require separate concepts and methods to be analyzed.\\

The main conceptual insight in the first part of the paper is the
configurations idea described in the beginning. With this idea we
indirectly characterize the ``regions of attraction'' of different
local optima of the $k$-means objective function. To our knowledge,
this is the first such characterization in the vast literature of
$k$-means. \\

In the second part of the paper we study an initialization scheme for
the $k$-means algorithm. Our intention is not to come up with a new
scheme, but to show that a scheme already in use is ``good'' in the
sense that it tends to put initial centers in different clusters. It
is important to realize that such a property does not hold for the
widely used uniform random initialization.

On the technical side, most of the proofs and proof ideas in this
section are novel. In very broad terms, our analysis is reminiscent to that of
\citet{dasgupta:07}. One reason we needed new proof techniques lie
partly in the fact that we analyze one-dimensional Gaussians, whose
concentration properties differ qualitatively from those of high
dimensional Gaussians. We loose some of the advantages high
dimensionality confers. A second major difference is that $k$-means
behaves qualitatively differently from EM whenever more than one
Gaussian is involved. While EM weights a point ``belonging'' to a
cluster by its distance to the cluster center, to the effect that far
away points have a vanishing influence on a center $c_j$, this is not
true for $k$-means. A far-away point can have a significative
influence on the center of mass $c_j$, precisely because of the
leverage given by the large distance. In this sense, $k$-means is a
more brittle algorithm than EM, is less predictible and harder to
analyze. In order to deal with this problem we ``eliminated'' impure
clusters in Section \ref{sec:impure}. Third, while \citet{dasgupta:07}
is concerned with finding the correct centers when $\ktrue$ is known,
our analysis carries over to the regime when $\kalg$ is too large,
which is qualitatively very different of the former.

Of course many initialization schemes have been suggested and analyzed
in the literature for $k$-means (for examples see
\citealp{OstRabSchSwa06,ArtVas07}). However, these papers analyze the
{\em clustering cost} obtained with their initialization, not the
positions of the initial centers. \\

\small
\bibliography{general_bib,ules_publications,mmp_tmp}
\end{document}